\newcommand{\assref}[1]{(A\ref{#1})}
\newcommand{\norm}[1]{\left\vert#1\right\vert}
\newcommand{\norms}[1]{\vert#1\vert}
\newcommand{\normR}[1]{\left\lVert#1\right\rVert}
\newcommand{\normRs}[1]{\lVert#1\rVert}
\newcommand{\set}[1]{\left\{ #1\right\}}
\newcommand{\sets}[1]{\{ #1\}}
\newcommand{\EE}[1]{\mathbb{E}\left[#1\right]}
\newcommand{\EEs}[1]{\mathbb{E}[#1]}
\newcommand{\PP}[1]{\mathbb{P}\left(#1\right)}
\newcommand{\1}[1]{\mathbbm{1}_{#1}}
\newcommand{\NN}{\mathbb{N}}
\newcommand{\RR}{\mathbb{R}}
\newcommand{\CX}{\mathcal{X}}
\newcommand{\CY}{\mathcal{Y}}
\newcommand{\CF}{\mathcal{F}}
\newcommand{\CW}{\mathcal{W}}
\newcommand{\CC}{\mathcal{C}}
\NewDocumentCommand{\newzctheorem}{mO{#1}m}{
	\newtheorem{#1}[sharedtheoremcounter]{#3}
	\AddToHook{env/#1/begin}{%
		\zcsetup{countertype={sharedtheoremcounter=#2}}}
}
\newacronym[\glslongpluralkey={Block Markov Decision Processes}]{BMDP}{BMDP}{Block Markov Decision Process}
\newacronym[\glslongpluralkey={Markov Decision Processes}]{MDP}{MDP}{Markov Decision Process}
\newacronym[\glslongpluralkey={Partially Observable Markov Decision Processes}]{POMDP}{POMDP}{Partially Observable Markov Decision Process}
\newacronym[\glslongpluralkey={Rich Observation Markov Decision Processes}]{ROMDP}{ROMDP}{Rich Observation Markov Decision Process}
\newacronym[longplural={Random Sequential Adsorption}]{RSA}{RSA}{Random Sequential Adsorption}
\newacronym[longplural={Spatialized Random Graphs}]{SRG}{SRG}{Spatialized Random Graph}
\newacronym{AA}{AA}{Applied Analysis}
\newacronym{AI}{AI}{Artificial Intelligence}
\newacronym{AMI}{AMI}{Adjusted Mutual Information}
\newacronym{BMC}{BMC}{Block Markov Chain}
\newacronym{BRW}{BRW}{Block Random Walk}
\newacronym{CAIC}{CAIC}{Consistent Akaike Information Criterion}
\newacronym{CASA}{CASA}{Centre for Analysis, Scientific Computing and Applications}
\newacronym{CPSRL}{CPSRL}{Clustered Posterior Sampling for Reinforcement Learning}
\newacronym{CQT}{CQT}{Coherence \& Quantum Technology}
\newacronym{CUDA}{CUDA}{Compute Unified Device Architecture}
\newacronym{CWI}{CWI}{National Research Institute for Mathematics and Computer Science}
\newacronym{DIAM}{DIAM}{Applied Mathematics}
\newacronym{DNA}{DNA}{Deoxyribonucleic acid}
\newacronym{EEMCS}{EEMCS}{Electrical Engineering, Mathematics \&{} Computer Science}
\newacronym{ERRG}{ERRG}{Erd\"{o}s--R\'{e}nyi random graph}
\newacronym{EU}{EU}{European Union}
\newacronym{GPU}{GPU}{Graphics Processing Unit}
\newacronym{GSL}{GSL}{GNU Scientific Library}
\newacronym{HMM}{HMM}{Hidden Markov Model}
\newacronym{INSY}{INSY}{Intelligent Systems}
\newacronym{LMAB}{LMAB}{Latent Multi-Armed Bandit}
\newacronym{LSBM}{LSBM}{Labeled Stochastic Block Model}
\newacronym{MAB}{MAB}{Multi-Armed Bandit}
\newacronym{MCS}{M\&{}CS}{Mathematics and Computer Science}
\newacronym{MC}{MC}{Markov Chain}
\newacronym{MI}{MI}{Mutual Information}
\newacronym{ML}{ML}{Machine Learning}
\newacronym{MSVC}{MSVC}{Microsoft Visual \CC}
\newacronym{NAS}{NAS}{Network Architectures and Services}
\newacronym{NN}{NN}{Neural Network}
\newacronym{ODE}{ODE}{Ordinary Differential Equation}
\newacronym{OFU}{OFU}{Optimism in the Face of Uncertainty}
\newacronym{PSRL}{PSRL}{Posterior Sampling for Reinforcement Learning}
\newacronym{PyPI}{PyPI}{Python Package Index}
\newacronym{QCE}{Q\&{}CE}{Quantum \& Computer Engineering}
\newacronym{QED}{QED}{Quality--and--Efficiency--Driven}
\newacronym{RBM}{RBM}{Reflected Brownian Motion}
\newacronym{RL}{RL}{Reinforcement Learning}
\newacronym{SBM}{SBM}{Stochastic Block Model}
\newacronym{SGD}{SGD}{Stochastic Gradient Descent}
\newacronym{SPECTRA}{SPECTRA}{Sparse Eigenvalue Computation Toolkit as a Redesigned ARPACK}
\newacronym{SVD}{SVD}{Singular Value Decomposition}
\newacronym{TS}{TS}{Thompson Sampling}
\newacronym{TUDelft}{TU Delft}{Delft University of Technology}
\newacronym{TUe}{TU/e}{Eindhoven University of Technology}
\newacronym{UCBVI}{UCBVI}{Upper Confidence Bound Value Iteration}
\newacronym{UCB}{UCB}{Upper Confidence Bound}
\newacronym{UTQ}{UTQ}{University Teaching Qualification}
\newacronym{WiCoS}{WiCoS}{Wireless Communication and Sensing}
\newacronym{EURANDOM}{EURANDOM}{European Institute for Statistics, Probability, Stochastic Operations Research and their Applications}
\newacronym{SPOR}{SPOR}{Statistics, Probability, and Operations Research}
\newacronym{PAC}{PAC}{Probably Approximately Correct}
\newcommand{\doctitle}{Dropout Neural Network Training Viewed from a Percolation Perspective}
\begin{document}

\title{\doctitle}

\author{Finley Devlin}
\affiliation{Second Foundation, Czechia}
\author{Jaron Sanders}
\affiliation{Eindhoven University of Technology, The Netherlands}

\begin{abstract}
	In this work, we investigate the existence and effect of percolation in training deep \glspl{NN} with dropout.
	Dropout methods are regularisation techniques for training \glspl{NN}, first introduced by G.\ Hinton et al.\ (2012).
	These methods temporarily remove connections in the \gls{NN}, randomly at each stage of training, and update the remaining subnetwork with \gls{SGD}.
	The process of removing connections from a network at random is similar to percolation, a paradigm model of statistical physics.

	If dropout were to remove enough connections such that there is no path between the input and output of the \gls{NN}, then the \gls{NN} could not make predictions informed by the data.
	We study new percolation models that mimic dropout in \glspl{NN} and characterise the relationship between network topology and this path problem.
	The theory shows the existence of a percolative effect in dropout.
	We also show that this percolative effect can cause a breakdown when training \glspl{NN} without {biases} with dropout; and we argue heuristically that this breakdown extends to \glspl{NN} with biases.
\end{abstract}

\maketitle

\section{Introduction}

\Gls{AI} is increasingly prevalent in many industries.
According to Eurostat, around 30\% of large \Gls{EU} enterprises used \gls{AI} technologies in 2023~\cite{Eurostat:2024:Use}.
These technologies improve worker productivity---potentially bridging a gap between low- and high- skilled workers---and can now perform many tasks better than humans~\cite{Maslej.ea:2024:AI}.

\Gls{ML} underpins many \gls{AI} technologies.
Modern \gls{ML} techniques are often developed in practice as outcome-based approaches, and they often break statistical norms; for example, by relying on overparameterisation~\cite{Allen-Zhu.ea:2019:Convergence, Dar.ea:2021:Farewell}.
This also means that many of the commonly used \gls{ML} methods have outpaced our mathematical understanding.
Developing solid theory behind \gls{ML} is imperative for the challenges faced in \gls{AI} and the future performance of these methods~\cite{Holzinger.ea:2019:Causability}.

We focus in this paper on the training of \glspl{NN}, a \gls{ML} technique.
Training effectively is the largest practical challenge with \glspl{NN}.
This namely requires solving a high dimensional, non-convex optimisation problem and is thus computationally expensive.
For this reason, more computationally efficient methods from stochastic approximation are used for training \glspl{NN}, the most common of which is \emph{\gls{SGD}}~\cite{Kushner.ea:2003:Stochastic}.
Another challenge with \glspl{NN} is to ensure that the model makes high quality predictions on unseen data, and not just the data it was trained on, known as generalisability.
For this, regularisation techniques are employed that modify the training process~\cite{Santos.ea:2022:Avoiding}.

More precisely, we study a regularisation technique known as \emph{dropout}~\cite{Hinton.ea:2012:Improving}.
Dropout methods temporarily omit parts of the \gls{NN} at random, at each step of training.
A stochastic approximation algorithm is then used as normal to update the weights in the remaining subnetwork.
The idea behind this is to force different parts of the network to rely less on one another, enabling them to learn features more independently, and thus improving the generalisability.
Although there is a fair amount of research on dropout's regularisation properties, there is not as much work on the performance of dropout as a stochastic approximation method in its own right~\cite{Senen-Cerda.ea:2025:Almost}.
And while effective, it has, e.g., been observed that training with dropout is slower than with other algorithms~\cite{Garbin.ea:2020:Dropout,Khan.ea:2019:Regularization,Krizhevsky.ea:2012:ImageNet}.
As such, the analytical picture of how and why dropout works well is not yet complete, despite its abundant use in practice.

The random omission of parts of a network relates to \emph{percolation theory}.
Percolation is a paradigm model of statistical physics~\cite{Camia.ea:2019:Probability}, first used to model the flow of fluid through a porous medium in~\cite{Broadbent.ea:1957:Percolation}.
Percolation theory studies whether paths cross the medium, and if so, the fluid is said to percolate.
Seeing the \gls{NN} as the medium, if dropout were to omit parts of the network such that no paths cross it, then information would not percolate.
If information does not flow across the \gls{NN}, then it cannot make predictions informed by data.

This paper contributes to the understanding of dropout by formalising and investigating this problem.
The insight that we bring is a mathematical proof that percolation contributes to slower convergence of dropout in deep \glspl{NN} that are not sufficiently wide.
More specifically:

\begin{itemize}
	\item
	      \zcref[S]{chapter:preliminaries} describes new percolation models that mimic dropout in \glspl{NN} and with scalable depth.
	      The scalable depth allows us to model deep learning.

	\item
	      \zcref[S]{chapter:percolation} characterises the relationship between network topology and the aforementioned path problem.
	      This establishes the existence of percolative behaviour in dropout.
	      Notably, \zcref[S]{prop:3:site:2:critical,prop:3:bond:2:critical} establish critical behavior in the percolation probability; see also \zcref[S]{fig:3:sitephasespace,fig:3:bondparameter-diagram}.

	\item
	      \zcref[S]{chapter:dropout} shows that this path problem can cause a breakdown of dropout when training deep \glspl{NN} without {biases}.
	      \zcref[S]{thm:4:main} ties the performance of the dropout algorithm to the percolation probability.
	      We argue heuristically that \glspl{NN} with biases are also susceptible to such breakdown.
\end{itemize}
The proofs are in the appendices.

\subsection{Previews of the results}

To illustrate the statement ``characterises the relationship between network topology and the aforementioned path problem,'' here is a brief preview of \zcref[S]{prop:3:site:2:trivial, prop:3:site:2:critical}.
They imply that a percolation threshold occurs at a critical scaling of a \gls{NN} using dropout.
To see this, consider a rectangular \gls{NN} of depth $n$ and width $W(n)$ in which dropout removes vertices with probability $p \in (0,1)$.
\zcref[S]{prop:3:site:2:trivial} shows that if $W(n) = o( \log{n} )$ or $W(n) = \omega( \log{n} )$, its percolation function, i.e., the crossing probability, converges to $0$ or $1$ as $n \to \infty$, respectively.
\zcref[S]{prop:3:site:2:critical} states that if $W(n) \sim \log{n}$, then there exists a critical dropout probability $p_c \in (0,1)$ such that if $p > p_c$, $p = p_c$, or $p < p_c$, the percolation function converges to $0$, a constant in $(0,1)$, or $1$, respectively.
In other words, there is a non-degenerate percolation threshold at $p_c$ when the width of a \gls{NN} scales proportionally to $\log{n}$, and not in \glspl{NN} scaled more gently or intensively in width.

To explain the statement ``the path problem can cause dropout to break down in deep \glspl{NN},'' here is a preview \zcref[S]{cor:4:constantW}.
This corollary of \zcref[S]{thm:4:main} implies, for example, that if one trains
\begin{enumerate}[noitemsep, label=(\roman*)]
	\item a rectangular neural network of depth $n$, width $W$, and without biases,
	\item using dropconnect for $T(n)$ iterations, with step sizes $\alpha_t > 0$, so each weight is dropped independently with probability $p$ in each iteration,
\end{enumerate}
then the weights can move at most on the order of
$
	\exp{( - n p^{W^2} )}
	\sum_{t=0}^{T(n)-1}
	\alpha_t
$
from their initialisation.
For example, if $p = 1/2$ and $\alpha_t = \alpha / (t+1)$, this implies that in very deep \glspl{NN}, $T(n)$ must be at least super-exponential in $n$ for the weights to move a constant distance.
In conclusion, compensating for the lack of percolation contributes to the many iterations required to train a deep \gls{NN} with dropout.

\section{Preliminaries}\label{chapter:preliminaries}

We now cover the relevant preliminary knowledge and notation across \glspl{NN}, dropout, and percolation theory.
Throughout, we work with probability space $(\Omega, \CF, \mathbb{P})$, and all random variables are assumed measurable.

\subsection{Deep learning}\label{sec:2:deep_learning}

We consider feedforward \glspl{NN} which do not have loops.
Each layer in such \gls{NN} consists of activation functions and parameters called weights and biases.
The activation functions $\sigma:\RR \to \RR$ are defined as real functions, but we often apply them pointwise over vectors; e.g., $\sigma(x) := (\sigma(x_1), \dots, \sigma(x_n))$ for $x\in\RR^n$.
Let $\CX$ be the \emph{feature space} and $\CY$ the \emph{label space} of the \glspl{NN} with dimensions $d_1, d_2 \in \NN$, respectively.
Typically, $\CX = \RR^{d_1}$ and $\CY = \RR^{d_2}$.

\begin{definition}[Deep feedforward \gls{NN}]\label{def:2:deep_feedforward_nn}
	Let $L\in\NN$, $W_0, \dots, W_{L+1} \in\NN$, and $\sigma_1, \dots , \sigma_{L+1}$ be activation functions corresponding to each (non-input) layer.
	Define affine transformations $S_\ell :\RR^{W_{\ell - 1}} \to \RR^{W_\ell}$ by $S_\ell (x) := A_\ell x + b_\ell$ where $A_\ell \in \RR^{W_\ell \times W_{\ell -1}}, \; b_\ell \in \RR^{W_\ell}$ for $\ell = 1, \dots, L+1$.
	A \emph{feedforward \gls{NN}} is the model $F: \CX \times \CW \to \CY$ defined by
	\begin{align}
		F(x, w)
		:=
		\sigma_{L+1} \circ S_{L+1}\circ \cdots \circ \sigma_1 \circ S_1(x),
	\end{align}
	for $(x,w) \in \CX \times \CW$.
	Here, the parameters $w = (w_1, \ldots, w_{L+1}) \in \CW:=\CW_1 \times \cdots \times \CW_{L+1} $ are given by $w_\ell = (A_\ell, b_\ell) \in\RR^{W_\ell \times W_{\ell -1}} \times \RR^{W_\ell}$.
\end{definition}

In \zcref[S]{def:2:deep_feedforward_nn}, $L$ denotes the number of hidden layers, yielding $L+2$ total layers including input and output.
The network’s \emph{depth} refers to its total number of layers; it is \emph{deep} if $L>1$ and \emph{shallow} if $L=1$.
The quantities $W_0, \dots, W_{L+1}$ specify the number of neurons per layer (the \emph{widths}).
The parameters $A_1, \dots, A_{L+1}$ are the \emph{weights}, and $b_1, \dots, b_L$ the \emph{biases}.
The $i$th neuron in layer $\ell$ is indexed by $(i,\ell)$.

\zcref[S]{fig:2:nn_schematic} depicts a deep feedforward \gls{NN}.
A neuron can be thought of as a particle to which an activation function is applied.
Each neuron is given a real value input as a linear combination of the outputs of the previous layer with some added bias.
The biases can be seen as extra neurons which always output the value 1.
Each edge between neurons in \zcref[S]{fig:2:nn_schematic} denotes a weight and thus the edges between the layers collectively depict the affine transformations, while the neurons in the non-input layers represent the activation functions.

\begin{figure}[htbp]
	\centering
	\includegraphics[width=0.618\linewidth]{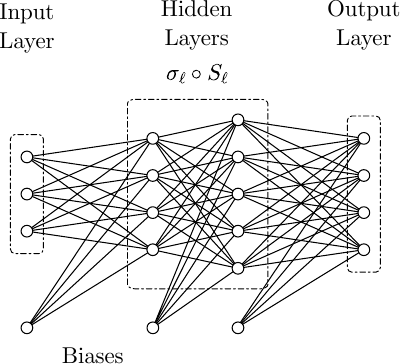}
	\caption{A schematic representation of a fully connected deep \gls{NN} with $L=2$ hidden layers.}%
	\label{fig:2:nn_schematic}
\end{figure}

The connections between neurons form an important part of the structure of the \gls{NN}.
In \zcref[S]{fig:2:nn_schematic} all the layers are fully connected, but if a weight is zero between two neurons, then they are functionally no longer connected.
We refer to the graph consisting of the neurons and their connections with non-zero weights (excluding the biases) as the \emph{neural network connectivity graph}.
An example is depicted in \zcref[S]{fig:2:nncongraph}.

\begin{definition}[Neural network connectivity graph]\label{def:2:NN_connec}
	Let $F( \cdot, w)$ be a deep feedforward \gls{NN} with parameters $w$.
	Define a vertex set $V$ with one vertex for each neuron.
	Define a directed edge set $E$ such that $((i, \ell), (j, \ell+1)) \in E$ if ${[A_\ell]}_{ij} \neq 0$.
	Let $G(F( \cdot, w)):= (V, E)$ be known as the \emph{connectivity graph} of $F( \cdot, w)$.
\end{definition}

\begin{figure}[htbp]
	\centering
	\includegraphics[width=\linewidth]{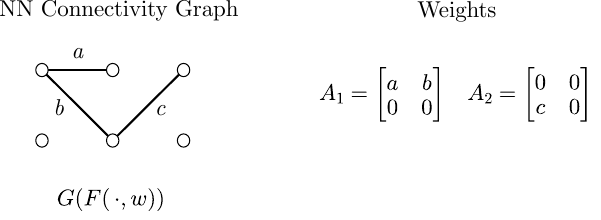}
	\caption{%
		An example of a \gls{NN} connectivity graph for a \gls{NN} with $L=1$ hidden layers, and corresponding weights $A_1, A_2$.
		Note that here, we presume that $a,b,c\neq 0$.
	}%
	\label{fig:2:nncongraph}
\end{figure}

In \gls{ML} terms, the architecture of the \gls{NN} refers to the choices $L$, $W_0$, $\dots$, $W_{L+1}$, and $\sigma_1$, $\dots$, $\sigma_{L+1}$.
This architecture forms a family of \glspl{NN} parametrised by $w$, which forms the set of admissible functions used in the supervised learning problem.
Similar such models are studied in, e.g.,~\cite{Jacot.ea:2018:Neural, Geiger.ea:2020:Disentangling}.

For simplicity, we assume that the layers have constant width, i.e., $W_\ell = W$ for $\ell = 0, \dots, L+1$, and are fully connected.
Little generality is lost because weights or biases can be set to zero.
A smaller network can also always be embedded into a larger, constant width network.

Historically, focus was placed on \emph{shallow \glspl{NN}} (when $L=1$) over deep \glspl{NN} (when $L>1$) partially because shallow \glspl{NN} already satisfy \emph{universal approximation} for non-polynomial activations~\cite{Leshno.ea:1993:Multilayer}.
However, in deep \glspl{NN}, activation functions are composed over the layers.
This allows one neuron to capture more complex curvature.
As such, deep \glspl{NN} can approximate a function at a faster rate than shallow \glspl{NN} using the same number of parameters~\cite{DeVore.ea:2021:Neural, Elbrachter.ea:2021:Deep}.
This motivates the move to deep \glspl{NN} from an approximation theory perspective.

Motivated by approximation theory and performance improvements, \glspl{NN} are becoming increasingly deep.
In this work, we consider the potential issues with training increasingly deep \glspl{NN} using dropout.
These issues are connected with percolation theory and for this, we consider the limiting case as $L \to \infty$.
This forms an important step in understanding the finite depth case.

\subsection{Dropout}\label{sec:2:dropout}

Dropout is a class of methods used for training \glspl{NN} in combination with \gls{SGD}.

\begin{definition}[\gls{SGD}]\label{def:2:SGD}

	Let $w\mapsto R(w)$ be a differentiable \textit{objective function} over parameter space $\CW$.
	The stochastic recursion
	\begin{equation}
		w_{t+1} = w_t - \alpha_t g(w_t, \xi_t)
	\end{equation}
	is called \emph{stochastic gradient descent} if
	\begin{equation}\label{eq:2:SO:unbiased}
		\EE{g(\,\cdot\, , \xi_t)} = \nabla R(\,\cdot\,)
		\quad
		\textnormal{for all}
		\quad
		t\in\NN
		.
	\end{equation}
\end{definition}

\gls{SGD} can be considered a stochastic approximation of the gradient descent algorithm.
In studies of stochastic approximations, see, e.g.,~\cite{Kushner.ea:2003:Stochastic}, the following assumptions are typical to obtain convergence guarantees:

\begin{assumption}\label{ass:2:SO:second_moment}
	$\sup_{t\in\NN} \EEs{\normR{g(\,\cdot\,, \xi_t)}^2} < \infty$
\end{assumption}

\begin{assumption}\label{ass:2:SO:learning_rates}
	$\sum_{t=0}^\infty \alpha_t = \infty$, $\alpha_t \geq 0$ for all $t\in\NN$ and $\alpha_t\to0$ as $t\to\infty$
\end{assumption}

We need \emph{not} adopt \zcref[S]{ass:2:SO:second_moment,ass:2:SO:learning_rates} for our results.
Rather, \zcref[S]{cor:4:constantW} relies only on the first moment being finite (see \zcref[S]{ass:4:second_moment}), and emphasizes the necessity of \zcref[S]{ass:2:SO:learning_rates}.

The idea of dropout is to modify \gls{SGD} such that at each training step, nodes (or weights) are removed randomly.
The proposed purpose of this is to reduce the correlation between different weights, meaning neurons can learn different features and improve generalisation of the network~\cite{Hinton.ea:2012:Improving, Srivastava.ea:2014:Dropout, Wan.ea:2013:Regularization}.
This is more broadly known as regularisation.

Dropout was first proposed in~\cite{Hinton.ea:2012:Improving}, in which each neuron and all their associated weights are filtered independently with probability $1/2$.
Following this, many dropout algorithms have been studied.
Notably,~\cite{Wan.ea:2013:Regularization} introduced \emph{dropconnect} which filters each weight independently with probability $p$.
In~\cite{Wan.ea:2013:Regularization}, dropconnect showed improved performance and reduced overfitting of digit recognition of the MNIST database~\cite{Deng:2012:MNIST}.

In this work, we refer to any dropout algorithm simply as dropout, and focus especially on dropconnect and the original dropout for their relations to bond and site percolation.
To that end, we now define a family of stochastic recursions that we will refer to as \emph{dropout \gls{SGD}}.

\begin{definition}[Dropout \gls{SGD}]\label{def:2:dropout_sgd}

	Let $w \mapsto R(w)$ be a differentiable \textit{objective function} over parameter space $\CW$ and ${(f_t)}_{t\in\NN}$ be a sequence of $\set{0,1}$-valued random vectors in parameter space $\mathcal{W}$ known as the \emph{filter vectors}.
	The stochastic recursion
	\begin{align}
		w_{t+1}
		=
		w_t - \alpha_t f_t\odot g(f_t\odot w_t, \xi_t)
	\end{align}
	is called \emph{dropout \gls{SGD}} if \zcref[S]{eq:2:SO:unbiased} holds.
	Here, $\odot$ denotes the component-wise product.
\end{definition}

\zcref[S]{def:2:dropout_sgd} describes a modification of \gls{SGD} whereby the gradient estimate is taken over the filtered network $f_t\odot w_t$, and only the non-filtered weights are updated.
The distribution of the sequence of filter vectors determines the specific dropout algorithm.

Dropconnect is described by filters $f$ such that each element corresponding to a weight is independently 0 with probability $p$ and $1$ with probability $1-p$.
In original dropout the filters are not independent: each neuron is filtered with probability $p$, and if a neuron is filtered, then all the connected \emph{weights} are filtered in $f$.
Note that biases are not typically filtered.
Two examples of filtered networks for dropconnect and dropout can be found in \zcref[S]{fig:2:dropout_samples}.

\begin{figure*}
	\centering
	\subfloat[Dropconnect with $p=0.5$]{\includegraphics[width=0.35\linewidth]{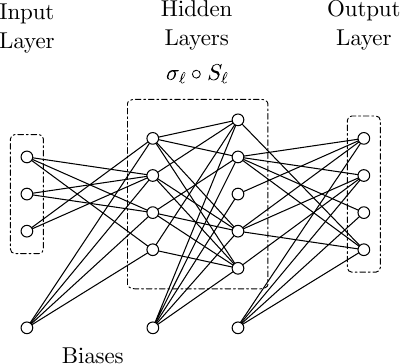}}
	\quad
	\subfloat[Original dropout with $p=0.5$]{\includegraphics[width=0.35\linewidth]{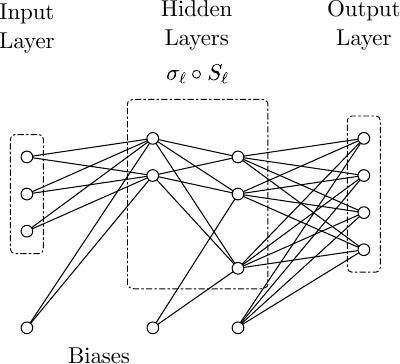}}
	\caption{Two schematics of \glspl{NN} $F( \cdot , f \odot w)$, with two different distributions of dropout filters.
		Observe that edges are dropped from each layer for dropconnect, but only in the hidden layers for original dropout.
		The unfiltered \gls{NN} $F( \cdot , w)$ is depicted in \zcref[S]{fig:2:nn_schematic}.
	}%
	\label{fig:2:dropout_samples}
\end{figure*}

Note that by modifying the stochastic approximation algorithm, the gradient estimates no longer correspond to the \gls{SGD} objective $R(\cdot)$ but rather some different implied \emph{dropout} objective.
This dropout objective is characterised in~\cite{Senen-Cerda.ea:2025:Almost}, along with convergence guarantees, and is defined as the solution of an ODE arising from the expected gradient estimates.
In \zcref[S]{chapter:dropout}, we discuss the dropout objective in the case of the ideal gradient estimate when the objective $R(w) := \EE{\ell(F(X, w), Y)}$ is defined in terms of a \emph{loss function} $(x,y)\mapsto \ell(x,y)$, which is most typical.

Our work revolves around the following potential issue with dropout.
Consider the scenario that the sampled filters $f$ are such that no path connects the input and output layers in the \gls{NN} connectivity graph $G(F( \cdot , f \odot w))$.
Consequently, $F( \cdot, f \odot w)$ is a constant function (\zcref[S]{lemma:4:no_path}), and any gradient estimate is therefore independent of the input data.
Under this scenario, a key component of supervised learning is lost: we have labels but with no corresponding input.
Thus, philosophically, no learning is contained in the dropout gradient estimate in this scenario.

\subsection{Percolation theory}

The canonical model in percolation theory considers the square lattice $\mathbb{Z}^2$ in which each edge of the infinite graph is removed independently with probability $p$.
This model is known as \emph{bond} percolation, and an example of this on a finite lattice can be found in \zcref[S]{fig:2:lattice}.
The classical question on this model is whether there exists an infinitely large connected component.

\begin{figure}[htbp]
	\centering
	\subfloat[$p=0.53$]{\includegraphics[width=0.4\linewidth]{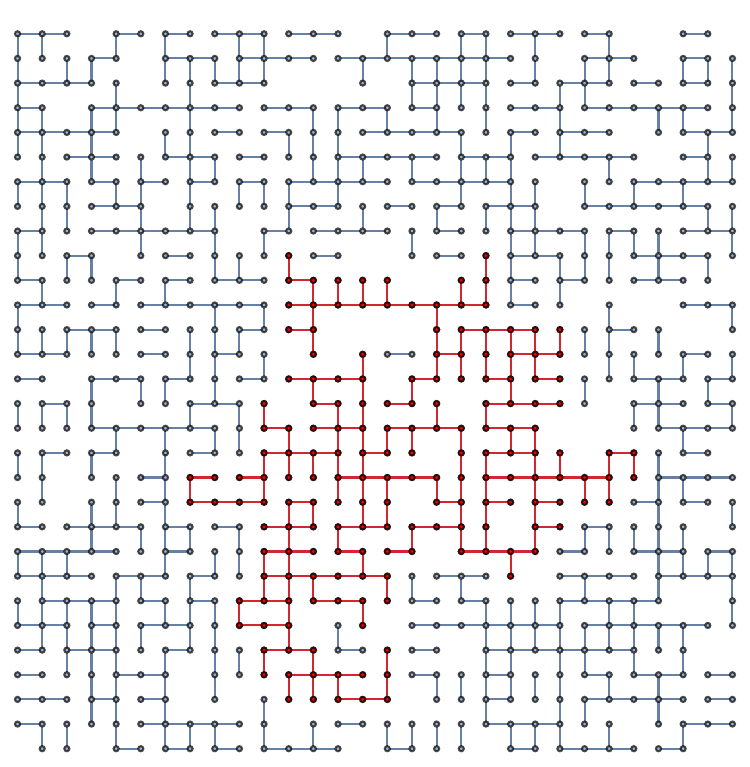}}
	\qquad
	\subfloat[$p=0.47$]{\includegraphics[width=0.4\linewidth]{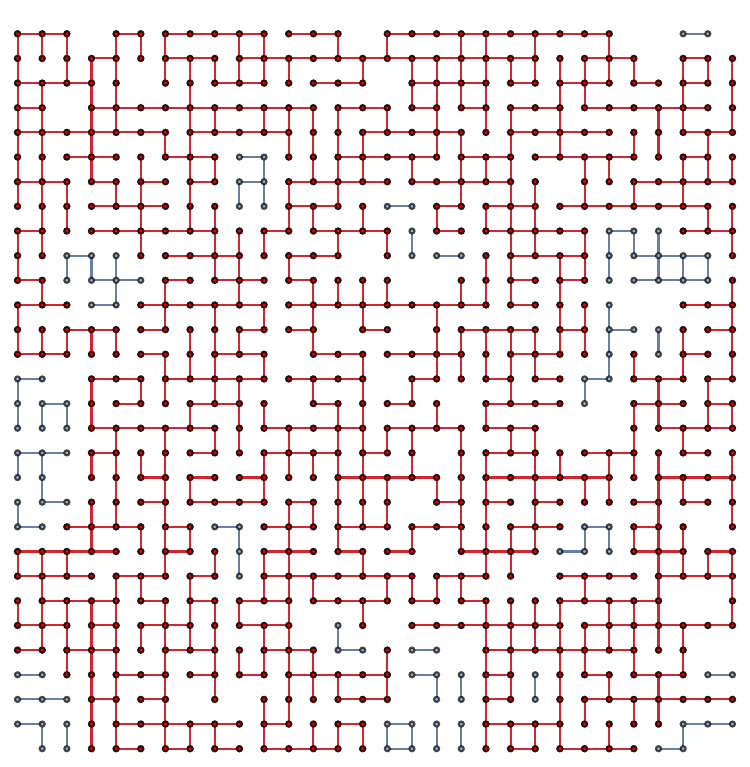}}
	\caption{Two samples of bond percolation on a $30\times 30$ square lattice, each with the largest connected component in red.}
	\label{fig:2:lattice}
\end{figure}

We can define another percolation model known as \emph{site} percolation by instead independently removing each vertex, and their edges, with probability $p$.
In this work we consider both bond and site percolation for their respective connections to dropconnect and original dropout.
In \zcref[S]{sec:2:perc:model} we define our dropout inspired percolation models and in \zcref[S]{sec:2:perc:crossing} we define our object of interest, the \emph{percolation function}, which describes the phase transition we wish to study.

\begin{remark}
	In percolation literature, edges are typically removed with probability $1-p$, but in this work, we use $p$ for readability of the proofs and results.
\end{remark}

\subsubsection{Model}\label{sec:2:perc:model}

We now define percolation models that describe the connectivity graphs of \glspl{NN} when dropout filters are applied.
These graphs consist of $L$ hidden layers of $W$ neurons, where each node connects to every node in the next layer, as outlined in \zcref[S]{sec:2:deep_learning}.
Additionally, our \glspl{NN} are feedforward and thus the edges between neurons are directed.

\begin{definition}[Rectangular Layered Network]
	Let $L, W \in \NN$ and define the set of vertices $V:= \set{1, \dots,W} \times \set{0, \dots, L+1}$.
	Let $E$ be the edge set containing the directed edges $(v_{(i, \ell)}, v_{(j, \ell+1)}) \in E$ for all $i,j=1,.
		..\dots W$ and $\ell=0, \dots, L$.
	Then $G = (V,E)$ is known as the $W\times L$\emph{ Rectangular Layered Network}.
	Additionally, $L$ is known as the \emph{length} and $W$ the \emph{width} of the graph, where there are $L+2$ layers each of $W$ vertices.
	Vertex $v_{(i, \ell)} \in V$ denotes the $i$th node in the $\ell$th layer.
	The 0th and $L+1$st layers are referred to as the \emph{input} and \emph{output} layers respectively.
\end{definition}

\begin{remark}
	Extensions to general layer widths are possible, but would be cumbersome while adding limited additional insight.
\end{remark}

By construction, the rectangular layered network is the same as the connectivity graph of our \gls{NN} model (fully connected, constant width, and feedforward) when all the weights are non-zero.
When we train the network with dropconnect, at each time step some weights are filtered by setting the weight to zero.
A zero weight is equivalent to there being no connection between the two corresponding vertices.
We can therefore consider this edge temporarily \emph{deleted}.
This is the precise link between the process of dropout and percolation, leading to the following definition:

\begin{definition}[Rectangular Layered Percolation]\label{def:rect_layered_perc}
	Let $L, W \in \NN$, $p\in [0,1]$, and $H = (V,E)$ be the $W\times L$ Rectangular Layered Network.
	Consider the following two percolation models:%
	\begin{enumerate}[label=(\roman*)]
		\item
		      \emph{Bond percolation.}
		      Remove each edge $e\in E$ with probability $p$, independently of one another.
		      Let $G=(V, E')$ be the resulting graph, with distribution $G^{\mathrm{bond}}(p, W, L)$, say.
		\item
		      \emph{Site percolation.}
		      Remove each vertex in $U=\set{1, \dots,W} \times \set{1, \dots, L}$ independently with probability $p$, independently of one another.
		      Let $G$ be the induced graph of $H$ over the remaining vertices, with distribution $G^{\mathrm{site}}(p, W, L)$, say.
	\end{enumerate}
\end{definition}

The difference between the bond and site percolation models is depicted in \zcref[S]{fig:2:perc_samples}.
The former corresponds to the dropconnect algorithm, and the latter to the original dropout algorithm.
Note that in the original dropout algorithm, only vertices in hidden layers are filtered.
This is reflected in \zcref[S]{def:rect_layered_perc} by omitting layers $0$ and $L+1$ in the subset $U$.

\begin{figure*}
	\centering
	\includegraphics[width=0.9\linewidth]{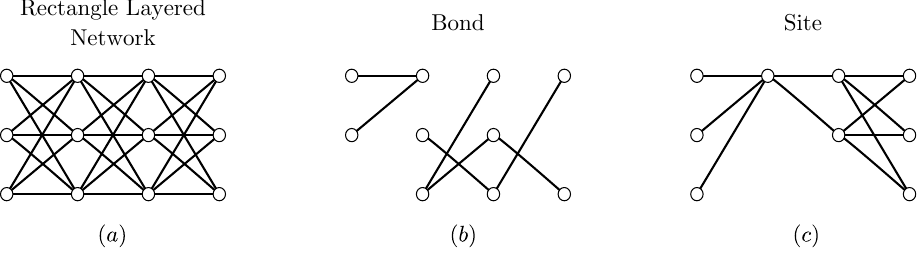}
	\caption{%
		An example of a rectangular layered network with $W=3, L=2$ in (a) and samples of bond and site percolation on this rectangular layered network in (b) and (c) respectively.
		Observe that vertices are not removed from the first or last layers in site percolation.
	}
	\label{fig:2:perc_samples}
\end{figure*}

Let $F$ be a fully connected \gls{NN} with $L$ hidden layers and constant width $W$, and let $f$ be corresponding filters with a dropconnect or original dropout distribution with parameter $p$.
Then by construction, the connectivity graph of the filtered network $G(F( \cdot , f))$, has distribution $G^{\text{bond}}(p, W, L )$ or $G^{\text{site}}(p, W, L )$, respectively.

\subsubsection{Crossing probability}\label{sec:2:perc:crossing}

The probability that a path connects the input and output layers is known as the \emph{crossing probability}.
In the canonical model, this crossing probability is characterised in the Russo--Seymour--Welsh theorem, which considers the probability that there exists a path from left to right of some $n\times kn$ box belonging to the lattice.
In our model, the crossing probability depends on the percolation parameter $p$ and network size $W\times L$.
We define this relationship as the \emph{percolation function}:

\begin{definition}[Percolation function]\label{def:2:percolation_function}
	Let $W, L \in \NN, p\in[0,1]$ and $G\sim G(p, W, L)$.
	Let $\mathcal{C}(G)$ be the set of vertices in the output layer that are connected to the input layer, thus
	\begin{equation}
		\mathcal{C}(G)
		:=
		\set{ j\in [W] : \exists i \in [W] \textnormal{ s.t.\ } (i,0) \leftrightarrow (j, L+1) }.
	\end{equation}
	We define the map $(p, W, L)\mapsto \theta(p, W, L)$ known as the \emph{percolation function} by
	\begin{equation}
		\theta(p, W, L)
		:=
		\PP{\norm{\mathcal{C}(G)} > 0}
		.
	\end{equation}
\end{definition}
Note that the percolation function depends on the percolation model, which, when the distinction is necessary, will be denoted by $\theta^{\text{bond}}$ or $ \theta^{\text{site}}$.
\zcref[S]{fig:2:perc_func_noscale} depicts the shape of the site percolation function as each parameter changes, with the other two parameters fixed.

These shapes are informed by our characterisation of the site percolation function in \zcref[S]{prop:3:site:1}.
As $L$ grows large, each potential path becomes longer and is thus less likely exist.
On the other hand, as $W$ grows, so too do the number of distinct potential paths, and therefore the probability that one of them reaches the output layer tends to 1.
We also see that the percolation function undergoes a rapid change around some value of $p$.
This alludes to the existence of a so-called \emph{phase transition} of the percolation function in $p$.

\begin{figure}[htbp]
	\centering
	\includegraphics[width=0.9\linewidth]{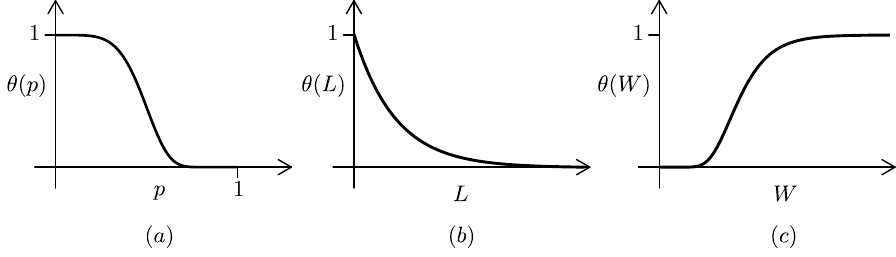}
	\caption{%
		Three figures depicting the shape of the site percolation function $\theta(p, W, L)$ when each of $p,W, L$ is varied.
		The shapes are given by \zcref[S]{prop:3:site:1}.
	}
	\label{fig:2:perc_func_noscale}
\end{figure}

Phase transitions are routinely observed in percolation models on infinite graphs.
To investigate phase transitions, we must consider limits of the percolation function as network grows in size.
We do so by letting the width and length $W, L:\NN \to\NN$ be maps of some index to network size, and then consider the limits of these networks.
The concept of a phase transition is formalised below with the definition of the critical threshold.

\begin{definition}[Critical percolation threshold]
	Let $W=W(n)$, $L=L(n)$ be such that $W(n),L(n)\to\infty$ as $n\to\infty$.
	Let $G_n \sim G(p, W(n), L(n))$ for all $n\in\NN$ and define $\theta_n(p) := \theta(p, W(n), L(n)) = \PP{\norm{\mathcal{C}(G_n)} >0}$.
	The \emph{critical percolation threshold} $p_c$ is then given by
	\begin{equation}
		p_c
		:=
		\inf\set{p\in [0,1] \;:\; \lim_{n\to\infty}\theta_n(p) = 0 }
		.
	\end{equation}
\end{definition}

Note that $\theta_n(p)\in[0,1], \theta_n(1) = 0, $ and $ \theta_n(0) = 1$, thus $p_c$ is well-defined.
The critical threshold defines a concept of phase transition because for $p\geq p_c$ there is no path with high probability, but below the threshold there is a positive probability that a path exists and thus a small change in the parameter yields a large change in the behaviour of the system.
It is important to observe that the critical threshold $p_c = p_c(L, W)$ depends on the specific scalings of $L, W$.
In \zcref[S]{chapter:percolation} we let $L(n) = n$ and analyse the dependence of $p_c$ on the scaling of $W$.

\begin{remark}
	In the above scaling regime we choose to fix $p$ rather than scale it.
	The results can in principle be extended to scenarios in which $p$ scales with $n$.
\end{remark}

\section{Related literature}

\subsection{Percolation theory and random graphs}

To our knowledge, the percolation model studied in this paper had not yet appeared in literature.
Individual features of it have been studied in literature, but not their combination.

Most attention in percolation has been on the size of the giant component, although the crossing probability has also seen attention, see, e.g.,~\cite{Watts:1996:crossing, Cardy:1992:Critical, Langlands.ea:1992:universality, Kohler-Schindler.ea:2023:Crossing}.
Much of this work considers \emph{planar} percolation models which operate upon an infinite lattice.
In our problem, instead, the degree of each vertex grows as the graph grows.

Graph models with growing degrees are common in random graph theory.
Our model allows vertex degrees that diverge in the limit, as in~\cite{Esker.ea:2005:Distances}, which studies distances in the configuration model with a power-law degree distribution.
In contrast, our model assigns degrees that are finite with probability tending to $0$ as the graph grows.
Its binomial degree distribution resembles that of the \emph{Erd\H{o}s--Rényi} random graph~\cite{Erdos.ea:1959:Random}, although that model is typically studied in a sparse regime.
Recent work~\cite{Lichev.ea:2024:Percolation} characterises the size of the giant component in bond percolation on dense configuration models.
While this shares degree and scaling features with our model, it differs because of the layered geometry \glspl{NN} have.

One problem involving growing dimension and geometric structure is bond percolation on the Hamming hypercube $\set{0,1}^m$, especially identifying the critical probability at which a giant connected component exists~\cite{Hofstad.ea:2017:Hypercube}.
However, the cube's geometry differs from our layered structure.
Percolation on layered graphs is studied in~\cite{Guha.ea:2016:Spanning}, where independent identically distributed percolation samples are stacked with edges between layers.
Dropout algorithms instead removes edges between layers.
While these differences matter for bond percolation, there is some similarity to site percolation, since edges between layers exist only when corresponding vertices are present.
Site percolation in two-layer networks is examined in~\cite{Cao.ea:2021:Percolation}, although our model scales both the number of layers and vertices per layer.

Finally, our graph is directed.
This reduces symmetry and adds complications motivating its own study~\cite{Henkel.ea:2008:Non-Equilibrium}.

\subsection{Convergence of dropout}

Literature on the convergence of dropout is the most relevant to our work.
This literature is sparse but has seen recent attention.

A first stochastic approximation perspective on dropout appeared on arXiv in 2020~\cite{Senen-Cerda.ea:2025:Almost}.
Proposition~\cite[Proposition~6]{Senen-Cerda.ea:2025:Almost} proves that the iterates of dropout \gls{SGD}, for \glspl{NN} without biases, converge to the limiting set of an \gls{ODE}.
Under sufficient regularity of the activation functions and dropout algorithm, this shows that the iterates converge to the set of critical points almost surely.
This result used the classical tool known as the \emph{\gls{ODE} method}~\cite[Chapter~5]{Kushner.ea:2003:Stochastic}.
The result largely avoided issues of percolation and concluded that dropout algorithms are well-behaved asymptotically.
Proposition~\cite[Proposition~7]{Senen-Cerda.ea:2025:Almost} characterises convergence in a weak sense and shows that the rate of convergence to stationary points increases if the edge removal probability decreases.
Finally,~\cite[Proposition~9]{Senen-Cerda.ea:2025:Almost} characterises the rate of convergence of dropout to the minimiser when the underlying network is an arborescence.
The result finds exponential decay with a rate smaller than $O({(1-p)}^n)$, which shows a clear dependence on the dropout probability.

The difference with our work lies in the perspective on where issues may lie.
We focus on the potential breakdown of dropout due to possibly no path crossing the network, whereas~\cite{Senen-Cerda.ea:2025:Almost} argues that dropout is well-behaved asymptotically and investigates whether the convergence is slowed by dropout.
For our perspective, we identify the precise scaling of the topology and training time needed to avoid such a breakdown.
Dropout is \emph{allowed} to misbehave asymptotically (\zcref[S]{thm:4:main}), and its objective \emph{may} be of poor quality (\zcref[S]{sec:4:int:algo}).

Between 2020 and 2022,~\cite{Senen-Cerda.ea:2022:Asymptotic, Mianjy.ea:2020:Convergence} both studied the convergence of dropout in shallow \glspl{NN}, although with different techniques and assumptions.
In~\cite{Senen-Cerda.ea:2022:Asymptotic}, the convergence rate of linear shallow \glspl{NN} is shown, given that the iterates follow the trajectories of the gradient flow identified by the \gls{ODE} method.
In particular, it is shown that under this gradient flow, the trajectories converge exponentially fast, with an exponent depending on the dropout parameter, when close to a minimiser.
Reference~\cite{Mianjy.ea:2020:Convergence} studies the convergence of dropout in shallow ReLU \glspl{NN}.
They show, under assumptions of data separability and lazy training, that $\varepsilon$-optimality of the objective can be reached with $O(1/\varepsilon)$ training steps.
Lazy training refers to assuming that the iterates of \gls{SGD} do not move far from their initialisation.
Different to~\cite{Senen-Cerda.ea:2022:Asymptotic}, the convergence rate in~\cite{Mianjy.ea:2020:Convergence} is independent of the percolation probability, which they argue is due to separability.
We make no such assumptions, and in particular, we consider deep \glspl{NN} to connect to percolation.

Also in 2022,~\cite{Mohtashami.ea:2022:Masked} obtained convergence results for a generalised version of \gls{SGD} which allows for perturbations and filters.
In particular,~\cite{Mohtashami.ea:2022:Masked} shows that $\varepsilon$-optimality of the averages of $\normRs{\nabla R(w_t)}^2$ can be reached in $O(1/\varepsilon^2)$ training steps.
Choosing dropout filters recovers a result similar to~\cite[Proposition~7]{Senen-Cerda.ea:2025:Almost}.
Characterising the average of means has precedent as a convergence result for non-convex objectives, for example~\cite[Theorem 4.8]{Bottou.ea:2018:Optimization}, which can be recovered by~\cite[Theorem~1]{Mohtashami.ea:2022:Masked}.
Although this is a weak characterisation of convergence, it enables a general result with mild assumptions.
In \zcref[S]{chapter:dropout} we characterise convergence more directly by looking at the evolution of the parameters themselves.

Most recently in 2024,~\cite{Shalova.ea:2024:Singular-limit} shows convergence of a general class of \emph{noisy} gradient descent algorithms.
This class includes dropconnect and the original dropout.
In particular,~\cite{Shalova.ea:2024:Singular-limit} shows that when the learning rate and dropout parameter are small, the time scaled iterates follow the trajectories of an \gls{ODE} towards the manifold of minimisers, when starting in its basin of attraction.
Moreover, they characterise the behaviour of the scaled iterates when they reach the MM and continue to evolve along it.
While the results of~\cite{Shalova.ea:2024:Singular-limit} hold for a general class of noisy gradient descent algorithms, they scale algorithm hyperparameters and not topology like we do in \zcref[S]{chapter:dropout}.

\subsection{Regularisation of dropout}

Most research on dropout has focused on its regularisation properties, possibly because this is the intended purpose of dropout from a practical perspective.
The regularisation property was first observed in~\cite{Hinton.ea:2012:Improving, Srivastava.ea:2014:Dropout} and again for dropconnect in~\cite{Wan.ea:2013:Regularization}.
The property has since been well studied, see, e,g,~\cite{Baldi.ea:2013:Understanding, Baldi.ea:2014:dropout, Wager.ea:2013:Dropout, Mianjy.ea:2018:Implicit, Wei.ea:2020:implicit}.

This work does not study the regularisation property.

\section{Percolation Theory for Dropout \texorpdfstring{\glspl{NN}}{Neural Networks}}\label{chapter:percolation}

This section characterises the percolation function for our bond and site percolation models.
In particular, we find critical scalings of the topology such that the critical percolation threshold is non-trivial.

\zcref[S]{sec:3:site} contains results for the dropout-site percolation model.
\zcref[S]{sec:3:bond} considers the more challenging dropconnect--bond percolation model.
In \zcref[S]{sec:3:connection}, we show that the upper bound on bond percolation is recoverable from the results on site percolation.

\subsection{Site percolation}\label{sec:3:site}

We obtain a complete characterisation of the percolation function in \zcref[S]{prop:3:site:1} by exploiting its equivalence to the probability there are no cuts between layers.

\begin{proposition}\label{prop:3:site:1}
	$\theta^{\text{site}}(p, W, L) = {(1-p^W)}^{L}$
\end{proposition}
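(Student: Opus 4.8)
The plan is to reduce the crossing event to a purely layerwise condition and then use independence across layers. The key observation is that, because consecutive layers of the Rectangular Layered Network are completely connected, a directed path from the input layer to the output layer of $G \sim G^{\text{site}}(p,W,L)$ exists if and only if \emph{every} hidden layer $\ell \in \set{1,\dots,L}$ retains at least one undeleted vertex. I would prove both directions of this equivalence. For the forward implication, assume each hidden layer contains a surviving vertex and fix one such vertex $v_{(i_\ell,\ell)}$ for each $\ell = 1,\dots,L$; since the input and output layers are never deleted and every vertex of layer $\ell$ is joined to every vertex of layer $\ell+1$, the sequence $v_{(1,0)}, v_{(i_1,1)}, \dots, v_{(i_L,L)}, v_{(1,L+1)}$ is a directed path in $G$, so $\norm{\mathcal{C}(G)} > 0$; in fact the same argument connects every input vertex to every output vertex, so $\mathcal{C}(G) = [W]$ in that case. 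For the converse, if some hidden layer $\ell$ has all of its $W$ vertices deleted, then the induced subgraph $G$ contains no vertex at level $\ell$, and since any directed path from level $0$ to level $L+1$ must traverse level $\ell$, no such path exists and $\mathcal{C}(G) = \emptyset$. This is precisely the \textquote{no cuts between layers} reformulation alluded to above: a fully deleted hidden layer is exactly a cut separating input from output, and it is the only obstruction.

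With this equivalence in hand, the computation is short. The $W$ vertices of a fixed hidden layer $\ell$ are each deleted independently with probability $p$, so the probability that this entire layer is deleted is $p^W$, and the probability that it retains at least one vertex is $1 - p^W$. Since all deletion indicators across all hidden vertices are mutually independent, the $L$ events \textquote{layer $\ell$ retains a vertex}, $\ell = 1, \dots, L$, are independent, and therefore
\begin{equation}
	\theta^{\text{site}}(p,W,L)
	=
	\PP{\bigcap_{\ell=1}^{L} \set{\text{layer } \ell \text{ retains a vertex}}}
	=
	\prod_{\ell=1}^{L}\bigl(1 - p^W\bigr)
	=
	\bigl(1 - p^W\bigr)^{L}.
\end{equation}

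I do not anticipate a serious obstacle; the only step needing care is the graph-theoretic equivalence, and specifically making explicit that it rests on the complete bipartite connectivity between consecutive layers. That structural feature is exactly what is lost under bond percolation, where a hidden layer can survive as a vertex set yet still fail to pass any path, which is why \Cref{sec:3:bond} must settle for bounds rather than an identity. The boundary cases are automatically consistent with the formula --- $p = 0$ gives $\theta^{\text{site}} = 1$, $p = 1$ gives $0$, and small $W$ or $L$ need no separate argument --- so no additional casework is required.
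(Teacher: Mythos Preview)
Your proof is correct and follows essentially the same approach as the paper: reduce the crossing event to the statement that every hidden layer retains at least one vertex (using the complete bipartite structure between consecutive layers), then multiply the independent layerwise survival probabilities $1-p^W$ over the $L$ hidden layers. Your write-up is simply more explicit about the two directions of the equivalence than the paper's version, which states it in a single sentence.
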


The following results characterise the limiting behaviour of the percolation function, in which it exhibits critical behaviour in both topology and percolation probability, as we scale $W, L$.

\begin{proposition}[Noncritical scalings]\label{prop:3:site:2:trivial}
	Let $p\in(0,1), L(n) = n$.
	The following then holds:
	\begin{enumerate}[label=\roman*.]
		\item If $W(n) = \omega(\log n)$, then $\theta_n^{\text{site}}(p) \to 1$ as $n\to\infty$ and $p_c = 1$.
		\item If $W(n) = o(\log n)$, then $\theta_n^{\text{site}}(p) \to 0$ as $n\to\infty$ and $p_c = 0$.
	\end{enumerate}
\end{proposition}

\zcref[S]{prop:3:site:2:trivial} demonstrates a phase transition in the topology around logarithmic scaling.
This is depicted in \zcref[S]{fig:3:sitephasespace}(a).
Then, under logarithmic scaling, we find a non-degenerate critical probability:

\begin{proposition}[Critical scaling]\label{prop:3:site:2:critical}

	Let $p\in(0,1), c>0$, $L(n) = n$.
	If $W(n)/\log n \to c $ as $n\to\infty$, then $p_c = \exp(-1/c)$.
	Moreover, the following then holds:
	\begin{enumerate}[label=\roman*.]
		\item If $p < p_c$ then $\theta_n^{\text{site}}(p) \to 1$ as $n\to\infty$.
		\item If $p > p_c$ then $\theta_n^{\text{site}}(p) \to 0$ as $n\to\infty$.
		\item If $p = p_c$ then $\theta_n^{\text{site}}(p) \to e^{-1}$ as $n\to\infty$.
	\end{enumerate}
\end{proposition}

\zcref[S]{prop:3:site:2:critical} gives a complete view of the phase transition in the percolation probability under the critical logarithmic scaling.
This is depicted in \zcref[S]{fig:3:sitephasespace}(b).
The rate $c$ smoothly controls the critical probability, whereby any $p_c\in(0,1)$ can be achieved with $W(n) \sim -\log(n)/\log(p_c)$, explicitly connecting the interaction of the topology and percolation parameter.
This describes a critical plain of $(W(n), p)$, along which \zcref[S]{prop:3:site:2:critical}.iii states that the percolation function is constant.
Additionally, there exists a fixed-point like relationship at $c=1$ whereby $ \theta^{\text{site}}_n(p_c) \to p_c$ as $n\to\infty$.

\begin{figure}[htbp]
	\centering
	\resizebox{\linewidth}{!}{\begin{tikzpicture}[
    >={Stealth[length=2.5mm]},
    line cap=round,
    font=\normalsize,
]
\begin{scope}
    \def\xfig{4}     
    \def\yfig{4}     
    \def\xc  {2}     
    \def\xone{4}     
    \def\yc  {2}     

    \fill[pattern=north west lines] (0,\yc) rectangle (\xone,\yfig);
    \fill[pattern=ne dashed lines]  (0,0  ) rectangle (\xone,\yc);

    \draw[dashed,thick] (0,\yc) -- (\xone,\yc);

    \draw[thick]            (\xc,\yc) -- (\xc,0  );
    \draw[thick,fill=white] (\xc,\yc) circle (2.5pt);   

    \draw[->,thick] (0,0) -- (0,\yfig+0.5) node[above] {$W(n)$};
    \draw[->,thick] (0,0) -- (\xfig+0.6,0) node[right] {$p$};

    \draw[thick] (-0.08,\yc) -- (0.08,\yc);
    \node[left=2pt] at (0,\yc) {$c\log n$};       
    \node[left=2pt] at (0,3)   {$\omega(\log n)$};
    \node[left=2pt] at (0,1)   {$o(\log n)$};

    \draw[thick] (\xc,-0.08) -- (\xc,0.08);
    \node[below=2pt] at (\xc  ,0) {$\exp(-1/c)$};
    \draw[thick] (\xone,-0.08) -- (\xone,0.08);
    \node[below=2pt] at (\xone,0) {$1$};

    \node[fill=white,inner sep=2pt] at (2, 3) {$\theta_n(p)\to 1$};
    \node[fill=white,inner sep=2pt] at (2, 1) {$\theta_n(p)\to 0$};

    \node at (\xfig/2, -1.4) {$(a)$};
\end{scope}

\begin{scope}[xshift=5.5cm]
    \def\xfig{4}     
    \def\yfig{4}     
    \def\cval{2}     

    \pgfmathsetmacro{\pexit}{exp(-1/\yfig)}     
    \def\cstart{0.01}  

    \fill[pattern=north east lines]
        (0,0)
        -- (0,\yfig)
        -- ({\xfig*\pexit},\yfig)
        -- plot[variable=\c,domain=\yfig:\cstart,samples=100]
               ({\xfig*exp(-1/\c)},\c)
        -- cycle;

    \fill[pattern=ne dashed lines]
        (0,0)
        -- plot[variable=\c,domain=\cstart:\yfig,samples=100]
               ({\xfig*exp(-1/\c)},\c)
        -- (\xfig,\yfig)
        -- (\xfig,0)
        -- cycle;

    \draw[very thick]
        plot[variable=\c,domain=\cstart:\yfig,samples=140]
            ({\xfig*exp(-1/\c)},\c);

    \draw[dashed,thick] (\xfig,0) -- (\xfig,\yfig+0.3);

    \draw[->,thick] (0,0) -- (0,\yfig+0.5) node[above] {$c$};
    \draw[->,thick] (0,0) -- (\xfig+0.6,0) node[right] {$p$};

    \draw[thick] (\xfig,-0.08) -- (\xfig,0.08);
    \node[below=2pt] at (\xfig,0) {$1$};

    \node[above=2pt] at (\xfig-1.2,\yfig-0.1) {$p = \exp(-1/c)$};

    \node[fill=white,inner sep=2pt] at (1.3, 3) {$\theta_n(p)\to 1$};
    \node[fill=white,inner sep=2pt] at (3, 1) {$\theta_n(p)\to 0$};

    \node at (\xfig/2, -1.4) {$(b)$ Case $W(n) = c\log n$};
\end{scope}
\end{tikzpicture}}
	\caption{%
		Diagrams showing the cases of \zcref[S]{prop:3:site:2:trivial, prop:3:site:2:critical}.
		Note that (b) depicts the boundary case in (a).
		Here, $L(n) = n$.
	}
	\label{fig:3:sitephasespace}
\end{figure}

These results identify that when using dropconnect in networks that are asymptotically deeper than they are wide, $W=O(\log{L})$, there may not exist a path that connects the input and output layers of the network.
In \zcref[S]{chapter:dropout} we characterise the evolution of the \gls{NN}'s parameters under this regime.

\subsection{Bond percolation}\label{sec:3:bond}

The dropconnect-bond percolation model is more challenging to study, as graph cuts may span across many layers at once, as illustrated in \zcref[S]{fig:3:sitecomparison}.
Site percolation only cuts the graph if an entire layer of vertices has been deleted, as in \zcref[S]{fig:3:sitecomparison}(b).
In bond percolation, there is similarly a cut if the entire layer of edges is missing, see \zcref[S]{fig:3:sitecomparison}(c), but additionally, the cut may instead extend across many layers as given in \zcref[S]{fig:3:sitecomparison}(d).

\begin{figure}[htbp]
	\centering
	\includegraphics[width=0.8\linewidth]{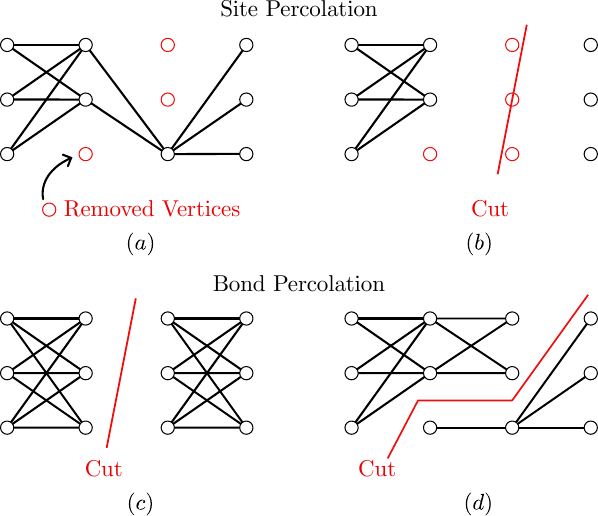}
	\caption{%
		Diagrams of bond and site percolation samples, demonstrating the differences in the occurrence of graph cuts in the models.
	}
	\label{fig:3:sitecomparison}
\end{figure}

Fortunately, we can still obtain an exact characterisation of the bond percolation function.
Computing the crossing probability is akin to path counting.
However, there are dependencies amongst the overlapping paths.
To circumnavigate this issue, we can exploit the layered structure and directness of the graph.
This allows one to obtain a recursive result by conditioning on the number of vertices reached by a path in each layer.

\begin{lemma}\label{lemma:3:bond:complete}
	Let $p\in(0,1)$, $W$, $L\in\NN$.
	Then
	\begin{equation}\label{eq:3:bond:theta_exact}
		\begin{split}
			 &
			\theta^{\text{bond}}(p, W, L)
			\\
			 &
			=
			\sum_{n_1, \ldots, n_{L+1} = 1}^W
			\prod_{\ell = 0}^{L}
			\PP{N_{\ell+1} = n_{\ell+1} \mid N_{\ell} = n_{\ell}}
		\end{split}
	\end{equation}
	where $N_\ell := | \mathcal{C}_\ell |$, $N_{\ell+1}\mid N_{\ell} = n_\ell \sim \text{Bin}(W, 1-p^{n_\ell})$ and $N_0 = W$, in which
	\begin{equation}\label{eq:3:reachable_nodes}
		\mathcal{C}_\ell :=	\set{j\in [W]  : \exists i \in [W]  \textnormal{ s.t.\ } (i,0) \leftrightarrow (j, \ell)}.
	\end{equation}
\end{lemma}

Limits of~\eqref{eq:3:bond:theta_exact} are non-trivial to analyse.
Instead, we find lower and upper bounds of $\theta$ that have analysable limits.

\begin{proposition}\label{prop:3:bond:1}
	The following bounds hold:
	$
		(
		1-p^{\frac{W^2}{4} +W\frac{\log{2}}{\log{p}}}
		)^{L+1}
		\allowbreak
		\leq
		\allowbreak
		\theta^{\text{bond}}(p, W, L)
		\allowbreak
		\leq
		\allowbreak
		{(1-p^{W^2})}^{L+1}
	$.
\end{proposition}

Observe that these bounds have a similar form to the site percolation probability in \zcref[S]{prop:3:site:1}.
Here, however, the exponent of $p$ scales with the square of the width.
There are after all on the order of $W^2$ independent filters between each layer rather than on the order of $W$ sites.
This connection is explored further in \zcref[S]{sec:3:connection}.

The following results characterise the limiting behaviour of these bounds.
In the first, we consider non-critical scaling of the topology.

\begin{proposition}[Noncritical Scaling]\label{prop:3:bond:2:trivial}
	Let $p\in(0,1)$, $L(n) = n$.
	The following then holds:
	\begin{enumerate}[label=\roman*.]
		\item If $W(n) = \omega(\sqrt{\log{n}})$, then $\theta_n^{\text{bond}}(p) \to 1$ as $n\to \infty$ and $p_c = 1$.
		\item If $W(n) = o(\sqrt{\log{n}})$, then $\theta_n^{\text{bond}}(p) \to 0$ as $n\to \infty$ and $p_c = 0$.
	\end{enumerate}
\end{proposition}

This result is analogous to \zcref[S]{prop:3:site:2:trivial} in site percolation, and states that the critical topological scaling is square root of logarithmic.
This corresponds well to our site percolation results and the intuition that there are now $W^2$ edges which can be independently removed, as opposed to the $W$ sites.
In the following result, we consider the critical probability under the critical topological scaling.

\begin{proposition}[Critical Scaling]\label{prop:3:bond:2:critical}
	Let $p\in(0,1), c>0$, $L(n) = n$.
	If $W(n)/\sqrt{\log n} \to c$ as $n\to\infty$, then $p_c \in [\exp(-4/c), \exp(-1/c)]$.
	Moreover, the following then holds:
	\begin{enumerate}[label=\roman*.]
		\item If $p < \exp(-4/c)$, then $\theta_n^{\text{site}}(p) \to 1$ as $n\to\infty$.
		\item If $p > \exp(-1/c)$, then $\theta_n^{\text{site}}(p) \to 0$ as $n\to\infty$.
	\end{enumerate}
\end{proposition}

Under square root logarithmic scaling with rate $c$, this result bounds the critical probability as a function of rate $c$.
The bounds tighten as $c$ grows small or large, and shows how the critical probability varies with this scaling factor.
In the corresponding site percolation result, \zcref[S]{prop:3:site:2:critical}, the critical probability was shown to be $\exp{(-1/c)}$, equal to the upper bound in this result.

The lower bound $p_c\geq \exp(-4/c)$ is a consequence of $\frac{W^2}{4} + W \frac{\log 2}{\log p}$ exponent in \zcref[S]{eq:3:bond:theta_exact}.
This stems from the event that the number of nodes reached stays away from zero, in particular larger than $W/2$.
The specific choice of $1/2$ maximises the bound.
These results are depicted in \zcref[S]{fig:3:bondparameter-diagram}, which shows the different cases outlines in the results.

\begin{figure}[htbp]
	\centering
	\resizebox{\linewidth}{!}{\begin{tikzpicture}[
	>={Stealth[length=2.5mm]},
	line cap=round,
	font=\normalsize,
	]
	\def\xfig{4}        
	\def\ytop{4}        
	\def\ymid{1.846}    
	\def\xa{1.500}      
	\def\xb{2.831}      
	
	\fill[pattern=north east lines] (0,\ymid) rectangle (\xfig,\ytop);
	\fill[pattern=ne dashed lines]  (0,0)     rectangle (\xfig,\ymid);
	
	\draw[thick] (0,\ymid) -- (\xa,\ymid);
	\draw[thick,dash pattern=on 0.6pt off 3pt] (\xa,\ymid) -- (\xb,\ymid);
	\draw[thick,dash pattern=on 3pt off 2pt]   (\xb,\ymid) -- (\xfig,\ymid);
	
	\draw[->,thick] (0,0) -- (0,\ytop+0.6) node[above] {$W(n)$};
	\draw[->,thick] (0,0) -- (\xfig+0.6,0) node[right] {$p$};
	
	\draw[thick] (-0.08,\ymid) -- (0.08,\ymid);
	\node[left=2pt] at (0,\ymid)  {$\sqrt{\log n}$};
	\node[left=2pt] at (0,3.077)  {$\omega(\sqrt{\log n})$};   
	\node[left=2pt] at (0,0.923)  {$o(\sqrt{\log n})$};        
	
	\draw[thick] (\xa,-0.08) -- (\xa,0.08);
	\node[below=2pt,font=\footnotesize,xshift=-2mm] at (\xa,0) {$\exp(-4/c)$};
	\draw[thick] (\xb,-0.08) -- (\xb,0.08);
	\node[below=2pt,font=\footnotesize,xshift=2mm] at (\xb,0) {$\exp(-1/c)$};
	\draw[thick] (\xfig,-0.08) -- (\xfig,0.08);
	\node[below=2pt] at (\xfig,0) {$1$};
	
	\draw[thick,dashed] (\xa,0) -- (\xa,\ymid);
	\draw[thick,dashed] (\xb,0) -- (\xb,\ymid);
	
	\node[fill=white,inner sep=2pt] at (2, 3.077) {$\theta_n(p)\to 1$};    
	\node[fill=white,inner sep=2pt] at (2, 0.923) {$\theta_n(p)\to 0$};    
	
	\node at (\xfig/2, -1.4) {$(a)$};
	
	\begin{scope}[xshift=5.5cm]
		\def\xfig{4}        
		\def\yfig{4}        
		\def\cmax{10}       
		\def\cstart{0.01}   
		
		\pgfmathsetmacro{\pexitB}{exp(-4/\yfig)}   
		\pgfmathsetmacro{\pexitA}{exp(-1/\yfig)}   
		
		\fill[pattern=north east lines]
		(0,0)
		-- (0,\yfig)
		-- ({\xfig*\pexitB},\yfig)
		-- plot[variable=\c,domain=\cmax:\cstart,samples=120]
		({\xfig*exp(-4/\c)},{\c*\yfig/\cmax})
		-- cycle;
		
		\fill[pattern=ne dashed lines]
		(0,0)
		-- plot[variable=\c,domain=\cstart:\cmax,samples=120]
		({\xfig*exp(-1/\c)},{\c*\yfig/\cmax})
		-- (\xfig,\yfig)
		-- (\xfig,0)
		-- cycle;
		
		\draw[very thick]
		plot[variable=\c,domain=\cstart:\cmax,samples=200]
		({\xfig*exp(-4/\c)},{\c*\yfig/\cmax});
		\draw[very thick]
		plot[variable=\c,domain=\cstart:\cmax,samples=200]
		({\xfig*exp(-1/\c)},{\c*\yfig/\cmax});
		
		\draw[dashed,thick] (\xfig,0) -- (\xfig,\yfig+0.2);
		
		\draw[->,thick] (0,0) -- (0,\yfig+0.5) node[above] {$c$};
		\draw[->,thick] (0,0) -- (\xfig+0.6,0) node[right] {$p$};

		\draw[thick] (\xfig,-0.08) -- (\xfig,0.08);
		\node[below=2pt] at (\xfig,0) {$1$};
		
		\node[above=2pt, font=\footnotesize, xshift=+5mm] at ({\xfig*\pexitB},\yfig) {$p = \exp(-4/c)$};
		\node[above=2pt, font=\footnotesize, xshift=+0mm] at (\xfig,\yfig)            {$p = \exp(-1/c)$};
		
		\node[fill=white,inner sep=2pt] at (1.2, 3.2) {$\theta_n(p)\to 1$};   
		\node[fill=white,inner sep=2pt] at (3.1, 0.4) {$\theta_n(p)\to 0$};   
		
		\node at (\xfig/2, -1.4) {$(b)$ Case $W(n) = c\log n$};
	\end{scope}
\end{tikzpicture}}
	\caption{Diagram depicting the cases of \zcref[S]{prop:3:bond:2:trivial, prop:3:bond:2:critical}}
	\label{fig:3:bondparameter-diagram}
\end{figure}

\begin{remark}
	\eqref{eq:3:bond:theta_exact} is a sum over all possible paths of a Markov chain, specifically, $(N_\ell)_{\ell \in \NN}$.
	If $P$ denotes its transition matrix, then $\theta(p, W, L) = 1 - {[P^{L+1}]}_{W,0}$.
	A detailed hitting time analysis of this chain looks needed to close the gap in \zcref[S]{prop:3:bond:2:critical}.
\end{remark}

\subsubsection{Monotonicity properties of \texorpdfstring{$\theta^{\text{bond}}$}{theta-bond}}

Typically, percolation functions exhibit monotonicity in $p$ as more bonds imply a greater probability of a path or giant component.
We prove such monotonic properties in the following two lemmas using graph coupling ideas.

\begin{lemma}\label{lemma:3:bond:monotonic_p}
	Let $p_1, p_2 \in(0,1)$ where $p_1\leq p_2$ and $L, W\in \NN$.
	Then $\theta^{\text{bond}}(p_1, W, L) \geq \theta^{\text{bond}}(p_2, W, L)$.
\end{lemma}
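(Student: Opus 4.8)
We want to show that the bond percolation crossing probability $\theta^{\text{bond}}(p, W, L)$ is monotone decreasing in $p$. Intuitively, larger $p$ means more edges are deleted, so it should be harder for a path to cross. This is a standard monotonicity fact in percolation theory, usually proved via a coupling argument.

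**My plan:**

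The plan is to use a standard monotone coupling of the two bond percolation models $G_1 \sim G^{\text{bond}}(p_1, W, L)$ and $G_2 \sim G^{\text{bond}}(p_2, W, L)$ on a common probability space. First I would assign to each edge $e$ of the $W \times L$ Rectangular Layered Network an independent uniform random variable $U_e \sim \text{Unif}[0,1]$. For parameter $p$, declare edge $e$ to be present if $U_e > p$ (recall the paper's convention: each edge is retained with probability $1-p$). This gives, for each $p$, a sample of $G^{\text{bond}}(p, W, L)$, and crucially, since $p_1 \leq p_2$, the event $\{U_e > p_2\}$ implies $\{U_e > p_1\}$; hence $E'_2 \subseteq E'_1$ almost surely, i.e.\ $G_2$ is a subgraph of $G_1$ on the same vertex set.

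The key monotonicity step is then to observe that $\CC(G)$, the set of output-layer vertices connected to the input layer, is monotone under edge addition: if $G' = (V, E')$ and $G'' = (V, E'')$ with $E'' \subseteq E'$, then any path from $(i,0)$ to $(j, L+1)$ in $G''$ is also a path in $G'$, so $\CC(G'') \subseteq \CC(G')$. Applying this with the coupled samples gives $\CC(G_2) \subseteq \CC(G_1)$ almost surely, and in particular $\{|\CC(G_2)| > 0\} \subseteq \{|\CC(G_1)| > 0\}$ as events on the common probability space. Taking probabilities yields
\begin{equation}
	\theta^{\text{bond}}(p_2, W, L) = \PP{|\CC(G_2)| > 0} \leq \PP{|\CC(G_1)| > 0} = \theta^{\text{bond}}(p_1, W, L),
\end{equation}
which is the claim.

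**Main obstacle:** There is no real obstacle here — this is a routine coupling argument, and the only thing to be careful about is bookkeeping with the paper's non-standard convention (edges retained with probability $1-p$, not $p$), so that the direction of the inequality comes out correctly. One could alternatively give a proof directly from the exact formula in \Cref{lemma:3:bond:complete} by arguing that each conditional binomial transition $\text{Bin}(W, 1-p^{n_\ell})$ is stochastically monotone in $p$ and then propagating this through the layered Markov chain via a stochastic-monotonicity (coupling of Markov chains) argument; but the direct graph coupling is cleaner and is also the natural setup for the companion \Cref{lemma:3:bond:monotonic_w} on monotonicity in $W$, so I would present that one.
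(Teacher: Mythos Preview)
Your proposal is correct and follows essentially the same approach as the paper: a monotone coupling so that $G_2 \subseteq G_1$ almost surely, whence $\CC(G_2) \subseteq \CC(G_1)$ and the inequality follows. The only cosmetic difference is in how the coupling is built: you use the standard one-shot construction with i.i.d.\ uniforms $U_e$ and retain $e$ when $U_e > p$, whereas the paper first samples $\hat{G}_2 \sim G^{\text{bond}}(p_2,W,L)$ and then, for each edge absent from $\hat{G}_2$, adds it to $\hat{G}_1$ independently with probability $1 - p_1/p_2$, verifying afterwards that the resulting $\hat{G}_1$ has the $G^{\text{bond}}(p_1,W,L)$ law; both constructions yield the same inclusion $E_2 \subseteq E_1$ and the same conclusion.
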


\begin{lemma}\label{lemma:3:bond:monotonic_w}
	Let $p \in (0,1), L\in \NN$ and $W_1, W_2\in\NN$ where $W_1 \leq W_2$.
	Then $\theta^{\text{bond}}(p, W_1, L) \leq \theta^{\text{bond}}(p, W_2, L)$.
\end{lemma}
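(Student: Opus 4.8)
The plan is to construct an explicit monotone coupling of the two bond percolation models $G^{\text{bond}}(p,W_1,L)$ and $G^{\text{bond}}(p,W_2,L)$ on a common probability space, so that the smaller graph is (a subgraph of) the larger one and any crossing path in the small network immediately gives a crossing path in the large one. First I would fix $W_1\le W_2$ and realise both models simultaneously: take the $W_2\times L$ rectangular layered network $H_2=(V_2,E_2)$, and inside it single out the sub-network $H_1=(V_1,E_1)$ spanned by the vertices $v_{(i,\ell)}$ with $i\in\{1,\dots,W_1\}$. For each edge $e\in E_2$ draw an independent $\mathrm{Uniform}[0,1]$ random variable $U_e$, and declare $e$ to be present in the $W_2$-model iff $U_e\le 1-p$. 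For the $W_1$-model, use the \emph{same} variables $U_e$ restricted to $e\in E_1\subseteq E_2$, again keeping $e$ iff $U_e\le 1-p$. Since $E_1\subseteq E_2$, every edge retained in the $W_1$-model is retained in the $W_2$-model, so the realised $W_1$-graph is a subgraph of the realised $W_2$-graph. The marginals are correct by construction: each edge of $H_i$ is kept independently with probability $1-p$, so the $W_i$-marginal is exactly $G^{\text{bond}}(p,W_i,L)$.

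Next I would check that this coupling transports the crossing event. If in the $W_1$-model there is a path from the input layer to the output layer, that path uses only vertices $v_{(i,\ell)}$ with $i\le W_1$ and only edges of $E_1$, all of which are present in the $W_2$-model; moreover the input and output layers of $H_1$ are contained in those of $H_2$. Hence $\{|\mathcal{C}(G_{W_1})|>0\}\subseteq\{|\mathcal{C}(G_{W_2})|>0\}$ on this common space. Taking probabilities and using that the marginals agree with the two percolation models gives
\begin{equation}
	\theta^{\text{bond}}(p,W_1,L)
	=
	\PP{\norm{\mathcal{C}(G_{W_1})}>0}
	\le
	\PP{\norm{\mathcal{C}(G_{W_2})}>0}
	=
	\theta^{\text{bond}}(p,W_2,L),
\end{equation}
which is the claim.

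There is essentially no deep obstacle here; the only point that needs a little care is the bookkeeping of which layers count as ``input'' and ``output'' — one must make sure that the embedding $H_1\hookrightarrow H_2$ respects the layer structure (same length $L$, input layer into input layer, output layer into output layer), so that a crossing in $H_1$ is genuinely a crossing in $H_2$ and not merely a path between some interior vertices. Once that is set up, the argument is the standard subgraph-coupling monotonicity argument. The same template, incidentally, proves Lemma \ref{lemma:3:bond:monotonic_p}: there one keeps the vertex set fixed and couples via $U_e\le 1-p_2\le 1-p_1$, so that every edge present at parameter $p_2$ is present at parameter $p_1$, giving $\theta^{\text{bond}}(p_1,W,L)\ge\theta^{\text{bond}}(p_2,W,L)$.
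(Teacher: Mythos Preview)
Your proposal is correct and follows essentially the same approach as the paper: both arguments embed the $W_1\times L$ rectangular layered network into the $W_2\times L$ one and couple the two bond percolation models so that the smaller realised graph is a subgraph of the larger, whence $\mathcal{C}(\hat G_1)\subset\mathcal{C}(\hat G_2)$ and the inequality follows. The only cosmetic difference is that you realise the coupling via shared $\mathrm{Uniform}[0,1]$ variables $U_e$, whereas the paper constructs $\hat G_2$ directly by adjoining independent edges in $F_2\setminus F_1$ to $\hat G_1$; these are equivalent formulations of the same monotone coupling.
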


\zcref[S]{lemma:3:bond:monotonic_p,lemma:3:bond:monotonic_w} follow from the intuition that if there are more bonds and thus more paths, then the probability of a crossing path is larger.
The lemmas also imply that $\theta_n(p)$ is non-increasing in $p$, which ensures the percolation function is well-behaved.

\subsection{Connecting site and bond percolation}\label{sec:3:connection}

The results in \zcref[S]{sec:3:site,sec:3:bond} showed similarities in structure, scaling, and critical probability.

The bounds on the bond percolation function in \zcref[S]{prop:3:bond:1} have a similar form to those for site percolation function in \zcref[S]{prop:3:site:1}, but with $W$ replaced by $W^2$.
\zcref[S]{prop:3:bond:2:critical, prop:3:site:2:critical} focused on this difference in scaling leading to similar critical probabilities.
For the same $p$, both site and bond models have the same number of expected edges between each layer, that is, $pW^2$.
However, the bond model has on the order of $W$ times as many distinct paths.
This leads to the square-root relationship with the topology scaling.

Instead, we may consider scaling $p$ and fixing the topology between the models.
In the following result we recover the upper bound on the bond percolation function, \zcref[S]{prop:3:bond:1} (up to $L\mapsto L+1$), using a coupling argument with site percolation.

\begin{lemma}\label{lemma:3:connection}
	$\theta^{\text{bond}}(p, W, L) \leq \theta^{\text{site}}(p^W, W, L) = (1-p^{W^2})^{L}$
\end{lemma}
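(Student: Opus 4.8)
The plan is to construct an explicit coupling between the bond percolation model $G^{\text{bond}}(p, W, L)$ and a site percolation model, in such a way that a crossing path in the bond model forces a crossing path in the site model. The key observation is this: in $G^{\text{bond}}(p,W,L)$, a vertex $v_{(i,\ell)}$ in an interior layer is \emph{unusable} for any crossing path if all $W$ of its incoming edges (from layer $\ell-1$) are deleted, which happens independently with probability $p^W$. So I would define, for each interior vertex $u \in U = \set{1,\dots,W}\times\set{1,\dots,L}$, the indicator that all incoming edges at $u$ are absent; these are independent Bernoulli$(p^W)$ random variables, and deleting exactly those vertices yields a sample of $G^{\text{site}}(p^W, W, L)$. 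The first step is therefore to set up this coupling on a common probability space (sampling the edge-deletions of the bond model, then reading off the induced site-deletions).

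Next I would argue the domination: if $(i,0) \leftrightarrow (j,L+1)$ in the bond sample via some directed path $v_{(i_0,0)}, v_{(i_1,1)}, \dots, v_{(i_{L+1},L+1)}$, then for each interior layer $\ell \in \set{1,\dots,L}$ the vertex $v_{(i_\ell,\ell)}$ has at least one incoming edge present (namely the edge from $v_{(i_{\ell-1},\ell-1)}$), so it is \emph{not} deleted in the induced site sample. Moreover the directed edges $(v_{(i_{\ell-1},\ell-1)}, v_{(i_\ell,\ell)})$ are present in the bond sample, hence present in the site-induced subgraph (which retains all edges between surviving vertices — here I should double-check the exact convention in \Cref{def:rect_layered_site_perc}, but the site model is an induced subgraph of the full rectangular layered network, so any edge between two surviving vertices is present). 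Thus the same path witnesses $(i,0)\leftrightarrow(j,L+1)$ in the site sample, giving $\set{\norm{\CC(G^{\text{bond}})}>0} \subseteq \set{\norm{\CC(G^{\text{site}})}>0}$ on the coupled space, and therefore $\theta^{\text{bond}}(p,W,L) \leq \theta^{\text{site}}(p^W, W, L)$. The final equality $\theta^{\text{site}}(p^W,W,L) = (1-p^{W^2})^L$ is then immediate from \Cref{prop:3:site:1} with $p$ replaced by $p^W$, since $(p^W)^W = p^{W^2}$.

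The main obstacle — really the only delicate point — is making sure the induced site-deletions are genuinely distributed as $G^{\text{site}}(p^W,W,L)$ with the correct \emph{independence} structure. The events ``all incoming edges at $v_{(i,\ell)}$ are deleted'' for different vertices $v_{(i,\ell)}$, $v_{(i',\ell')}$ depend on disjoint sets of edges whenever the vertices are distinct (the incoming edge-set of $v_{(i,\ell)}$ is $\set{(v_{(k,\ell-1)}, v_{(i,\ell)}) : k \in [W]}$, and these sets are pairwise disjoint across all interior vertices), so independence holds, and each such event has probability exactly $p^W$. This matches \Cref{def:rect_layered_site_perc} precisely. One should also note that the coupling only goes one way: it is a stochastic domination, not an equality of laws, which is exactly why this recovers an \emph{upper} bound rather than reproving \Cref{lemma:3:bond:complete}. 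I would close by remarking that this upper bound is weaker than (or equal to) the one in \Cref{prop:3:bond:1}, which is consistent since $(1-p^{W^2})^L \geq (1-p^{W^2})^{L+1}$, and that it illuminates why the site percolation results transfer to bond percolation as claimed.
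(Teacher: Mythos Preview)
Your proposal is correct and follows essentially the same coupling as the paper: delete an interior vertex precisely when all of its incoming bond-edges are absent, verify this yields independent Bernoulli$(p^W)$ deletions because the incoming edge-sets at distinct vertices are disjoint, and then argue that any bond-crossing forces a site-crossing. The only cosmetic difference is that the paper packages the domination step via an intermediate graph $\hat{H}^{\text{bond}}$ (the induced subgraph of the bond sample on the surviving vertices) and the containment $\hat{H}^{\text{bond}}\subset \hat{G}^{\text{site}}$, whereas you trace a single crossing path directly; both arguments are equivalent, and your observation that the site graph is the induced subgraph of the \emph{full} layered network (so any edge between two surviving vertices is automatically present) is exactly the point that makes the path-tracing work.
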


The probability there are no edges between layer $\ell$ and $\ell+1$ in bond percolation is $p^{W^2}$, so the upper bound is just the probability that there are no cuts between two layers.
Similarly in \zcref[S]{prop:3:site:1}, the site percolation function is also the probability that there are no cuts between two layers.
This is connected formally in the coupling argument.
Coupling bond and site percolation is typical of percolation problems, see for example~\cite[Section~1.6]{Grimmett:1999:Percolation}.

\section{Breakdown of Dropout due to Percolation}\label{chapter:dropout}

In this section, we detail the effect of percolation on the behaviour of dropout.
In particular, we show that a lack of paths across the network causes a critical breakdown in the performance of dropout in which no learning occurs; see \zcref[S]{thm:4:main}.

Reference~\cite{Senen-Cerda.ea:2025:Almost} shows that, under suitable regularity conditions, dropout is well-behaved and converges to critical points.
Specifically, for any fixed \gls{NN} topology, taking the iteration limit first allows sufficient time to overcome path-related issues induced by dropout.
In contrast, if the limit is taken in the network topology first, then with high probability no path exists across the network.
In this case, the network cannot learn, as the output becomes independent of the input.
This phenomenon is established rigorously in \zcref[S]{lemma:4:no_path}.

To bridge this gap between the order of the limits, we ask whether there is some intermediate scaling of topology and training time that exhibits a non-trivial percolation problem.
This concept of an intermediate scaling is depicted in \zcref[S]{fig:4introphasespace}, which shows that this acts as a boundary between the two trivial cases from the ordering of the limits.

\begin{figure}[htbp]
	\centering
	\resizebox{0.8\linewidth}{!}{\begin{tikzpicture}[>=Latex, font=\large]
	
	\draw[->, thick] (0,0) -- (6,0) node[below=8pt, font=\Large] {Network Depth};
	\draw[->, thick] (0,0) -- (0,6) ;
	\node[left, font=\Large] at (-0.3,3) {Iteration};
	
	\draw[ultra thick, ->]
	plot[domain=0:2.3, samples=80, smooth]
	(\x*2.2, {0.55*(exp(\x)-1)});
	
	\node[align=left, anchor=south west] at (0.8 ,4.5)
	{No percolation\\ problem};
	
	\node[align=left, anchor=west] at (5.2,5.3)
	{Intermediate scaling\\ exhibiting non-trivial\\ percolation problem};
	
	\node[align=left, anchor=west] at (4.5,1.2)
	{Trivial percolation\\ problem};
	
\end{tikzpicture}}
	\caption{%
		A diagram depicting the effect of the order of limits in iterations (training time) and network depth on the percolation problem.
	}%
	\label{fig:4introphasespace}
\end{figure}

Our intermediate scaling perspective on the dropout problem constitutes one of the key insights in this section.
We believe that this scaling also has a better \gls{ML} interpretation than taking either limit first.
In practice, \glspl{NN} are only trained for finitely many steps, and with more parameters, yet ever larger \glspl{NN} may need more training.
In this way, the scaling says for how long a given sized network should be trained.

\subsection{The percolation problem}\label{sec:4:perc}

This section relies on two assumptions.
The first, \zcref[S]{ass:4:second_moment}, is weaker than the standard finite-second-moment assumption in stochastic approximation (\zcref[S]{ass:2:SO:second_moment}), while still ensuring that the gradient estimate is well behaved.
The second, \zcref[S]{ass:4:zero_updates}, asks that if there is no path across the network, then the dropout gradient is zero.
This is the critical assumption from which the results follow and is notably satisfied by \glspl{NN} without biases.

\begin{assumption}\label{ass:4:second_moment}
	$\sup_{n,t\in\NN}\EEs{\normRs{g^{(n)}( \cdot, \xi^{(n)}_t)}}<\infty$
\end{assumption}

\begin{assumption}\label{ass:4:zero_updates}
	If $\CC(G(F( \cdot , f))) = \emptyset$, then for all $w\in\CW$, $f\odot g(f\odot w, \cdot) = 0$ almost surely.
\end{assumption}

The following upper bounds the difference in the parameters at the end of training and their initial condition, in terms of the training time and percolation probability.

\begin{theorem}\label{thm:4:main}
	Let ${(F_n( \cdot , w^{(n)}_{T(n)}))}_{n\in\NN}$ be a sequence of deep feedforward \glspl{NN}, where each network is trained for $T(n) \in \NN$ steps, using the dropout algorithm
	\begin{equation}
		w^{(n)}_{t+1}
		=
		w^{(n)}_t
		-
		\alpha_t^{(n)} f^{(n)}_t \odot g^{(n)}(f^{(n)}_t \odot w^{(n)}_t, \xi_t^{(n)}),
	\end{equation}
	for independent and identically distributed filters $f^{(n)} \sim \lambda^{(n)}$ under \zcref[S]{ass:4:second_moment,ass:4:zero_updates}.
	Define the percolation probability as
	\begin{equation}
		\theta(n)
		:=
		\PP{\norms{\mathcal{C}(G( F_n( \cdot , f^{(n)})) )}> 0}
		.
	\end{equation}
	Then there exists an $M>0$ such that for all $n\in\NN$,
	\begin{equation}
		\EE{\lVert{w_{T(n)}^{(n)} - w_0^{(n)}}\rVert} \leq M\theta(n)\sum_{t=0}^{T(n)-1}\alpha_t^{(n)}.
	\end{equation}
\end{theorem}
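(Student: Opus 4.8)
The plan is to telescope the iteration, bound the norm of each increment separately, and invoke \Cref{ass:4:zero_updates} to discard exactly the steps at which the filtered network carries no crossing path. Summing the recursion from $t=0$ to $T(n)-1$ gives
\begin{equation}
	w_{T(n)}^{(n)} - w_0^{(n)}
	=
	-\sum_{t=0}^{T(n)-1}\alpha_t^{(n)}\, f^{(n)}_t \odot g^{(n)}\bigl(f^{(n)}_t \odot w^{(n)}_t, \xi_t^{(n)}\bigr),
\end{equation}
so by the triangle inequality and linearity of expectation it suffices to establish a per-step bound $\EE{\normR{f^{(n)}_t \odot g^{(n)}(f^{(n)}_t \odot w^{(n)}_t, \xi_t^{(n)})}} \leq M\theta(n)$ that is uniform in $t<T(n)$ and $n\in\NN$; summing the step sizes then yields the asserted inequality with the same $M$.

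For the per-step bound, fix $n$ and $t$ and let $B^{(n)}_t := \set{\CC(G(F_n( \cdot , f^{(n)}_t))) \neq \emptyset}$, which is $\sigma(f^{(n)}_t)$-measurable and, since $f^{(n)}_t \sim \lambda^{(n)}$, has probability $\PP{B^{(n)}_t} = \theta(n)$ by definition of $\theta(n)$. On the complement of $B^{(n)}_t$, \Cref{ass:4:zero_updates} (applied to $F_n$ and its estimator $g^{(n)}$) forces $f^{(n)}_t \odot g^{(n)}(f^{(n)}_t \odot w, \cdot) = 0$ for every $w$, so the increment vanishes there; and since each entry of $f^{(n)}_t$ lies in $\set{0,1}$, componentwise multiplication by $f^{(n)}_t$ cannot increase the norm, whence
\begin{equation}
	\normR{f^{(n)}_t \odot g^{(n)}\bigl(f^{(n)}_t \odot w^{(n)}_t, \xi_t^{(n)}\bigr)}
	\leq
	\1{B^{(n)}_t}\,\normR{g^{(n)}\bigl(f^{(n)}_t \odot w^{(n)}_t, \xi_t^{(n)}\bigr)}.
\end{equation}
The one delicate point is that $\1{B^{(n)}_t}$ and the gradient norm both depend on the same filter $f^{(n)}_t$, so the expectation cannot simply be split into a product (and a Cauchy--Schwarz split would only give $\sqrt{\theta(n)}$). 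Instead I would condition on $f^{(n)}_t$: because the filters are i.i.d.\ in $t$, $f^{(n)}_t$ is independent of $w^{(n)}_t$ (a function of $w_0^{(n)}$ and of the filters and noise strictly before step $t$), and the fresh stochastic-approximation noise $\xi^{(n)}_t$ is independent of both. Conditioning additionally on $w^{(n)}_t$ makes $f^{(n)}_t \odot w^{(n)}_t$ a deterministic parameter, so \Cref{ass:4:second_moment} gives $\EE{\normR{g^{(n)}(f^{(n)}_t \odot w^{(n)}_t, \xi_t^{(n)})}\mid f^{(n)}_t}\leq M$ almost surely, where $M := \sup_{n,t}\EEs{\normRs{g^{(n)}( \cdot, \xi^{(n)}_t)}}<\infty$. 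Since $B^{(n)}_t \in \sigma(f^{(n)}_t)$, the tower property yields
\begin{equation}
	\EE{\1{B^{(n)}_t}\,\normR{g^{(n)}\bigl(f^{(n)}_t \odot w^{(n)}_t, \xi_t^{(n)}\bigr)}}
	\leq
	M\,\PP{B^{(n)}_t}
	=
	M\,\theta(n).
\end{equation}

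Feeding this back into the telescoped sum gives $\EE{\normR{w_{T(n)}^{(n)} - w_0^{(n)}}} \leq M\theta(n)\sum_{t=0}^{T(n)-1}\alpha_t^{(n)}$, as required. I expect the only genuine obstacle to be the coupling between the ``a crossing path exists'' indicator and the gradient magnitude through the shared filter $f^{(n)}_t$; handling it by conditioning on $f^{(n)}_t$ rather than by a moment inequality is precisely what produces the sharp, linear-in-$\theta(n)$ factor. It is also worth stating explicitly that \Cref{ass:4:zero_updates,ass:4:second_moment} are read as holding for each network $F_n$ with its own estimator $g^{(n)}$, and that the standard stochastic-approximation independence structure --- i.i.d.\ filters across steps and noise independent of the past and of the current filter --- is in force; both are implicit in the set-up preceding the theorem.
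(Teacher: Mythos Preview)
Your proposal is correct and follows essentially the same route as the paper's proof: telescope the recursion, use \Cref{ass:4:zero_updates} to insert the indicator of the crossing event, exploit that filters are $\{0,1\}$-valued to drop the outer $f^{(n)}_t$ from the norm, and then condition on $f^{(n)}_t$ (tower property) so that \Cref{ass:4:second_moment} gives the uniform constant $M$ and the indicator integrates to $\theta(n)$. Your explicit remark that conditioning, rather than Cauchy--Schwarz, is what yields the linear-in-$\theta(n)$ factor is a nice clarification, but the argument itself matches the paper's step for step.
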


\zcref[S]{thm:4:main} characterises the expected difference in norm of the parameters after being trained for $T(n)$ time steps.
In particular, it connects the dropout--percolation probability of a \gls{NN} with the behaviour of the algorithm, thus connecting the scaling of the topology in $\theta(n)$ and scaling of training time $T(n)$.
Observe that if the network is trained for too few steps, or if the time steps are too small, and $\theta(n)\to 0$ more rapidly than $\sum_{t=0}^{T(n)-1} \alpha_t^{(n)}$ grows large, then the parameters do not move from their initial condition.
This means that if scaled incorrectly, the lack of paths causes no learning to happen.

The choice of topology scaling and $T(n)$ such that $\theta(n)\sum_{t=0}^{T(n)-1} \alpha_t^{(n)} \to 0$ as $n\to\infty$ gives an intermediate curve as in \zcref[S]{fig:4introphasespace} such that the percolation problem exists.
As such, to avoid this percolation problem, the networks must be trained for $T(n)$ time steps such that $\sum_{t=0}^{T(n)-1} \alpha_t^{(n)} \gg 1/\theta(n)$.
However, this condition does not guarantee that dropout behaves well; it only guarantees the avoidance of the collapse of dropout given by this theorem.
In this sense, the condition forms a lower bound on $T(n)$.

\subsubsection{The case of dropconnect}

\zcref[S]{thm:4:main} is written in general terms, but if we specify the filter distribution and network topologies, then we can find the largest scaling of $T(n)$ such that the percolation problem is not avoided.
In \zcref[S]{cor:4:constantW} we use our percolation results from \zcref[S]{sec:3:bond} to find a closed form scaling of $T(n)$ when using dropconnect.

\begin{corollary}\label{cor:4:constantW}
	Assume the conditions of \zcref[S]{thm:4:main}.
	Let each \gls{NN} $F_n$ have depth $n$ and width $W(n)$ for $n\in\NN$, and choose filters $f_t^{(n)}$ distributed according to dropconnect with parameter $p\in(0,1)$.
	Then there exists an $M>0$ such that for all $n\in\NN$,
	\begin{equation}\label{eq:4:constantW:bound}
		\EE{\lVert{w_{T(n)}^{(n)} - w_0^{(n)}}\rVert}
		\leq
		M \exp{(-np^{{W(n)}^2})}
		\sum_{t=0}^{T(n)-1}
		\alpha_t^{(n)}.
	\end{equation}
	In particular, assuming that $np^{{W(n)}^2} \to\infty$, then the following holds:
	\begin{enumerate}[label=\roman*)]
		\item
		      Let $\rho\in[0,1)$, $c\in (0, {(1-\rho)}^{-1}) $ and $\alpha>0$.
		      If $\alpha_t^{(n)} = \alpha / {(t+1)}^\rho$ and $T(n) = O(\exp(cnp^{{W(n)}^2})) $, then $\EE{\lVert{w_{T(n)}^{(n)} - w_0^{(n)}}\rVert} \to 0$ as $n\to\infty$.
		\item
		      Let $c\in(0,1)$ and $\alpha>0$.
		      If $\alpha_t^{(n)} = \alpha/(t+1)$ and $T(n) = O(\exp(\exp(cnp^{{W(n)}^2})))$, then $\EE{\lVert{w_{T(n)}^{(n)} - w_0^{(n)}}\rVert} \to 0$ as $n\to\infty$.
	\end{enumerate}
\end{corollary}

\zcref[S]{sec:3:bond} demonstrated the existence of a percolation effect in dropconnect and identified topological scalings under which this occurs.
This corollary states that the percolation effect induced by these scalings causes an issue whereby no learning happens at all, when training with dropconnect.
The condition $np^{{W(n)}^2}\to\infty$ is namely satisfied if $p\in(0,1)$ and $W(n) = o(\sqrt{\log(n)})$ and thus the training steps $T(n)$ necessary to avoid the percolation problem grows large.

Let us also highlight the case that $W$ is constant and with common learning rate $\alpha_t = \frac{\alpha}{t+1}$.
For this case, the growth of $T(n)$ is depicted in \zcref[S]{fig:4corconstantw}.

To avoid the percolation problem in the scenario of \zcref[S]{fig:4corconstantw}, the \gls{NN} must be trained for a doubly exponential number of training steps in terms of the depth.
As such, when deciding whether to add depth to the network, one should consider that many more training steps are needed if the network is already deep.
However, quantitatively, if $p$ is small, then $p^{W^2}$ is significantly smaller and so these issues can be mitigated by the choice of~$p$.
Moreover, by \zcref[S]{cor:4:constantW}.i we see that a slower decaying learning rate allows for $T(n)$ to grow more slowly.
Also, if the network is much wider than it is deep, then there is no percolation problem as given by \zcref[S]{prop:3:bond:2:trivial}.

\begin{figure}[hbtp]
	\centering
	\includegraphics[width=0.618\linewidth]{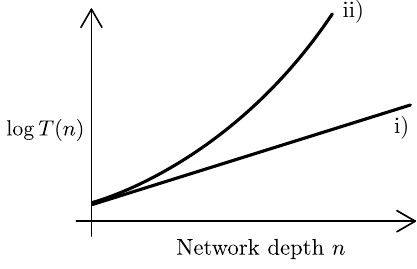}
	\caption{%
		Depiction of the growth of the training time required to avoid the percolation problem for constant width \glspl{NN} trained with dropconnect, as given by \zcref[S]{cor:4:constantW}.
		The log training time axis is used to show the difference in exponential and doubly exponential.
	}%
	\label{fig:4corconstantw}
\end{figure}

\subsubsection{Discussion}

\paragraph{Learning rates form continuous time axis.}
The term $\sum_{t=0}^{T(n)-1} \alpha_t^{(n)}$ in \zcref[S]{thm:4:main} can be thought of as a transformation of the discrete training time axis into a continuous time axis~\cite[Chapter 5]{Kushner.ea:2003:Stochastic}.
As such, the bound in \zcref[S]{thm:4:main} can be seen as a product of the transformed time and the percolation probability.

In particular, observe that with smaller percolation probabilities, parameters can move less distance within a fixed iteration budget.
This means that convergence of the training algorithm slows down.

\paragraph{\zcref[S]{thm:4:main} implies lazy training.}
\zcref[S]{thm:4:main} shows that for very deep networks, the parameters do not move far from the initialisation.
This is similar to the idea of lazy training.

Lazy training refers to training in a strongly over-parameterised scenario in which parameters move little from their initial value.
The \gls{NN} can then be approximated by a linear model around the initialisation.
Lazy training is used in practice for cheaper training steps, but it is also used in theoretical work to obtain convergence results~\cite{Mianjy.ea:2020:Convergence,Chizat.ea:2019:Lazy}.

In particular, \zcref[S]{thm:4:main} implies that for a sufficiently deep network, the probability that the process stays within an arbitrarily small region around the initialisation, during which the \gls{NN} can be approximated by a linear function, can be bounded.
We may be able to use such a linear approximation to further characterise the parameter process.

\paragraph{Gradient norm bounds are alone an incomplete picture.}
References~\cite{Senen-Cerda.ea:2025:Almost, Mohtashami.ea:2022:Masked} characterise the averages of the dropout gradient.
This is a weak characterisation of convergence and does not imply convergence of the iterates to the minimiser of the intended objective as demonstrated by our results.
In particular, the gradient estimate decays with percolation probability $\theta(n)$ as the networks grow large and thus the convergence obtained from these results may be induced by the percolation probability.

Note that the boundedness assumptions of~\cite[Proposition~7]{Senen-Cerda.ea:2025:Almost} are similar to \zcref[S]{ass:4:second_moment}, and the lack of biases and choice of ideal gradient estimate imply \zcref[S]{ass:4:zero_updates} (as discussed in \zcref[S]{sec:4:int:no_bias}).
Thus, under similar assumptions to the related work, we observe that there exists $M>0$ such that
\begin{equation}\label{eq:4:grads_to_0}
	\EE{\normR{f \odot\nabla R(f \odot w, \xi )}} \leq \theta_n(p)M,
\end{equation}
and so we clearly see that the gradients decay with the percolation function.
This corresponds to the geometry of the dropout objective becoming flat as the percolation function becomes small.

Intuitively, the dropout algorithm enforces that filtered weights are not updated, and so on average a fraction $p$ of the entries of the gradient estimate are equal to zero.
Therefore, as $p$ grows large, the magnitude of the gradient estimate decreases.
Our percolation results suggest that for network topologies susceptible to percolation effects, the fraction of zero gradients is much larger than~$p$ due to this percolation probability.

\paragraph{\zcref[S]{thm:4:main} implies convergence to a global minimiser.}
As seen in~\eqref{eq:4:grads_to_0} and discussed in the remark immediately above, under the conditions of \zcref[S]{thm:4:main}, the gradient estimates tend to 0 for all parameters $w\in\CW$ due to the percolation probability.
As such, the implied objective must be some constant with respect to the parameters $w$ (for details see~\eqref{eq:4:mod} in \zcref[S]{sec:4:int:algo}) and therefore \zcref[S]{thm:4:main} implies convergence to a trivial global minimiser.

\paragraph{\zcref[S]{cor:4:constantW} holds for original dropout.}
By replacing ${W(n)}^2\mapsto W(n)$ in \zcref[S]{cor:4:constantW}, we see that it also hold for original dropout through our results on site percolation in \zcref[S]{sec:3:site}.
This means that $T(n)$ must grow even larger to avoid the percolation problem when each neuron is filtered independently with probability $p$.

\subsection{Interpreting \texorpdfstring{\zcref[S]{ass:4:zero_updates}}{the assumptions}}\label{sec:4:interpretation}

\zcref[S]{sec:4:int:no_bias} next shows that networks without biases satisfy \zcref[S]{ass:4:zero_updates}, under standard choices of gradient estimates and a mild assumption on the activation functions.
\zcref[S]{sec:4:int:algo} instead modifies the dropout algorithm such that \zcref[S]{ass:4:zero_updates} holds, which bypasses a need for restrictions on the architecture of the \glspl{NN}.
We argue heuristically that this algorithm may perform better than unmodified dropout.
This suggests that a percolation problem also exists in unmodified dropout algorithms, even when applied to general \glspl{NN}.

\subsubsection{Networks without biases}\label{sec:4:int:no_bias}

Consider the following two assumptions:

\begin{assumption}\label{ass:4:int:no_bias}
	$F(x , w) = \sigma_{L+1} \circ A_{L+1} \circ \dots \circ \sigma_1 \circ A_1 (x)$ where $w = (A_{1}, \dots, A_{L+1})$ and $\sigma_\ell(0) = 0$ for all $\ell = 1, \dots,L+1$.
\end{assumption}

\begin{assumption}\label{ass:4:int:grad_est}
	Let $m \in \NN$ and ${((X_{t, i}, \allowbreak Y_{t, i}))}_{t \in \NN, i=1, \dots, m}$ be a sequence of random variables with $(X_{t, i}, Y_{t, i}) \sim \mu$ for all $t\in\NN$, $i=1, \dots, m$.
	Then let $I_t$ be a random subset of $\set{1, \dots, m}$ and choose
	\begin{equation}
		g(w, \xi_t)
		=
		\frac{1}{\norm{I_t}} \sum_{i \in I_t}\nabla \ell(F(w, X_{t, i}), Y_{t, i}).
	\end{equation}
\end{assumption}

The first, \zcref[S]{ass:4:int:no_bias}, specifies a class of \glspl{NN} without biases and with a mild restriction on the activation functions.
In \glspl{NN} without biases, dropout filters affect all relevant parameters.
This also has precedent since the previous work~\cite{Senen-Cerda.ea:2025:Almost} only considers \glspl{NN} without biases; and~\cite[Proposition~9]{Senen-Cerda.ea:2025:Almost} uses $\sigma(x) = x$ thus satisfying our condition on the activation functions.
We use \zcref[S]{ass:4:int:no_bias} to enforce that $F(0, w) = 0$, and to ensure that the gradient is zero when there is no path.

\zcref[S]{ass:4:int:grad_est} covers the most common choices in theory and practice.
If we have a dataset $\set{(x_i, y_i)}_{i=1}^m$ where we assume $(x_i, y_i)\sim \mu$, then we may choose $(X_{t, i}, Y_{t, i}) = (x_i, y_i)$ for $t\in\NN$, $i=1, \dots, m$.
Alternatively, if all samples are independent and identically distributed, then this covers the ideal choice in stochastic approximation theory such as in~\cite{Senen-Cerda.ea:2025:Almost,Fehrman.ea:2020:Convergence}.
The choice of distribution of $I_t$ allows one to model mini-batching.

To prove that \zcref[S]{ass:4:zero_updates} follows from \zcref[S]{ass:4:int:no_bias,ass:4:int:grad_est}, we require the following \zcref[S]{lemma:4:no_path}.
This lemma says that if there is no path across a \gls{NN}, then its output does not depend on its input.
We have referenced this lemma throughout the work because it formalises the intuition that no paths mean no flow of information across the \gls{NN}.

\begin{lemma}\label{lemma:4:no_path}
	Let $w\in\CW$, if $\CC(G(F( \cdot , w))) = \emptyset$, then $F(x , w) = F(0, w)$ for all $x\in\CX$.
\end{lemma}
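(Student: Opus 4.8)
The plan is to propagate, layer by layer, the fact that a neuron with no directed path from the input layer carries a value that does not depend on $x$, and then to read this off at the output layer. Fix $w\in\CW$ and write $z^{(0)}(x):=x$ and $z^{(\ell)}(x):=\sigma_\ell(S_\ell(z^{(\ell-1)}(x)))$ for $\ell=1,\dots,L+1$, so that $F(x,w)=z^{(L+1)}(x)$. Call a neuron $(j,\ell)$ \emph{input-reachable} if the connectivity graph $G(F(\cdot,w))$ (\Cref{def:2:NN_connec}) contains a directed path from some input neuron $(i,0)$ to $(j,\ell)$; via the empty path, every neuron of layer $0$ is input-reachable, while the hypothesis $\CC(G(F(\cdot,w)))=\emptyset$ means, by \Cref{def:2:percolation_function}, that \emph{no} neuron of the output layer $L+1$ is input-reachable.

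First I would prove, by induction on $\ell\in\{0,\dots,L+1\}$, the claim: if $(j,\ell)$ is not input-reachable, then $x\mapsto z^{(\ell)}_j(x)$ is constant. The base case $\ell=0$ is vacuous, since every layer-$0$ neuron is input-reachable. For the inductive step, assume the claim at layer $\ell-1$ and let $(j,\ell)$ be not input-reachable. Writing $[S_\ell(z^{(\ell-1)})]_j=\sum_i[A_\ell]_{ji}\,z^{(\ell-1)}_i+[b_\ell]_j$, the entry $[A_\ell]_{ji}$ is nonzero precisely when there is an edge from $(i,\ell-1)$ to $(j,\ell)$ in $G(F(\cdot,w))$ (\Cref{def:2:NN_connec}), so this value depends only on those coordinates $z^{(\ell-1)}_i$ with $(i,\ell-1)$ an in-neighbour of $(j,\ell)$. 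Any such in-neighbour $(i,\ell-1)$ is itself not input-reachable, for otherwise prepending a path witnessing its reachability to the edge $(i,\ell-1)\to(j,\ell)$ would witness the reachability of $(j,\ell)$. By the induction hypothesis each such $z^{(\ell-1)}_i$ is constant in $x$, and $[b_\ell]_j$ is a fixed scalar, so $[S_\ell(z^{(\ell-1)})]_j$ is constant in $x$; applying the scalar function $\sigma_\ell$ preserves this, so $z^{(\ell)}_j$ is constant in $x$, which closes the induction.

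Finally I would specialise the claim to $\ell=L+1$: since no output neuron is input-reachable, every coordinate of $z^{(L+1)}(x)=F(x,w)$ is constant in $x$, and evaluating at $x=0$ gives $F(x,w)=F(0,w)$ for all $x\in\CX$. The argument is essentially bookkeeping, and no assumption on the activations or on the presence of biases is needed. The one step that has to be got right — and which I expect to be the only real obstacle — is the structural observation that the nonzero-weight in-neighbours of a non-input-reachable neuron are themselves non-input-reachable (this is precisely what lets the induction close), together with keeping the edge/index conventions of \Cref{def:2:NN_connec} straight and noting that biases, being constants, never introduce a dependence on $x$.
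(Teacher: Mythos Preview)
Your proposal is correct and follows essentially the same route as the paper's proof: a layer-by-layer induction showing that any neuron not reachable from the input layer has a value independent of $x$, using the key structural observation that the nonzero-weight in-neighbours of a non-reachable neuron are themselves non-reachable. The only cosmetic differences are that the paper starts its base case at layer $\ell=1$ rather than the vacuous $\ell=0$, and phrases the invariant as $[F_\ell(x,w)]_i=[F_\ell(0,w)]_i$ rather than ``constant in $x$''.
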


The following \zcref[S]{lemma:4:no_bias_works} confirms that \glspl{NN} without biases of the type in \zcref[S]{ass:4:int:no_bias}, trained with standard choices of gradient estimates as in \zcref[S]{ass:4:int:grad_est}, satisfy \zcref[S]{ass:4:zero_updates} and thus the results of \zcref[S]{thm:4:main}.

\begin{lemma}\label{lemma:4:no_bias_works}
	If \zcref[S]{ass:4:int:no_bias,ass:4:int:grad_est} hold, then \zcref[S]{ass:4:zero_updates} holds.
\end{lemma}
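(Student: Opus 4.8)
The plan is to fix an outcome $\omega\in\Omega$ with $\CC(G(F(\,\cdot\,,f(\omega))))=\emptyset$ and an arbitrary parameter $w\in\CW$, and to show $f(\omega)\odot g(f(\omega)\odot w,\xi_t)=0$ for every realisation of $\xi_t$; since $\omega$ and $w$ range over all of the relevant event, this is precisely \Cref{ass:4:zero_updates}. The crux is the claim that \emph{$F(x,f(\omega)\odot v)=0$ for every $v\in\CW$ and every $x\in\CX$}. Granting this, the map $v\mapsto\ell(F(f(\omega)\odot v,X_{t,i}),Y_{t,i})$ is constant (it equals $\ell(0,Y_{t,i})$), hence has vanishing gradient; since $f(\omega)$ has entries in $\{0,1\}$, the chain rule gives $\nabla_v\big[\ell(F(f(\omega)\odot v,X_{t,i}),Y_{t,i})\big]=f(\omega)\odot\big(\nabla\ell(F(\,\cdot\,,X_{t,i}),Y_{t,i})\big)(f(\omega)\odot v)$, so $f(\omega)\odot\nabla\ell(F(f(\omega)\odot v,X_{t,i}),Y_{t,i})=0$ for all $v$. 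Taking $v=w$, averaging over $i\in I_t$, and using that $\odot$ distributes over the average yields $f(\omega)\odot g(f(\omega)\odot w,\xi_t)=\frac1{|I_t|}\sum_{i\in I_t}f(\omega)\odot\nabla\ell(F(f(\omega)\odot w,X_{t,i}),Y_{t,i})=0$.

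To establish the claim, I would first observe that $G(F(\,\cdot\,,f(\omega)\odot v))$ is a subgraph of $G(F(\,\cdot\,,f(\omega)))$ on the common vertex set: an edge $((i,\ell),(j,\ell+1))$ of the former has $[f_\ell(\omega)\odot v_\ell]_{ij}\neq0$, which forces $[f_\ell(\omega)]_{ij}=1$ and hence this same edge lies in the latter. Monotonicity of connectivity under passing to a subgraph then gives $\CC(G(F(\,\cdot\,,f(\omega)\odot v)))\subseteq\CC(G(F(\,\cdot\,,f(\omega))))=\emptyset$, so by \Cref{lemma:4:no_path} we get $F(x,f(\omega)\odot v)=F(0,f(\omega)\odot v)$ for all $x\in\CX$. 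Finally, \Cref{ass:4:int:no_bias} lets us write $F(\,\cdot\,,f(\omega)\odot v)=\sigma_{L+1}\circ A'_{L+1}\circ\cdots\circ\sigma_1\circ A'_1$ with each $A'_\ell=f_\ell(\omega)\odot v_\ell$ linear; evaluating at $0$ and using $A'_1(0)=0$ together with $\sigma_\ell(0)=0$ for all $\ell$, the input $0$ propagates unchanged through every layer, so $F(0,f(\omega)\odot v)=0$, proving the claim.

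The main obstacle is conceptual rather than computational: \Cref{ass:4:zero_updates} concerns the gradient estimate evaluated at the filtered parameter $f(\omega)\odot w$ and then multiplied again by $f(\omega)$, so the pointwise statement that $F(\,\cdot\,,f(\omega)\odot w)$ is constant in its input (which \Cref{lemma:4:no_path} alone provides) is insufficient; one needs the stronger fact that $F(\,\cdot\,,f(\omega)\odot v)$ is the zero function simultaneously for \emph{all} $v\in\CW$, so that the composite loss is flat in exactly the parameter directions that survive the filter $f(\omega)$. This is precisely where the absence of biases and the condition $\sigma_\ell(0)=0$ enter: were a bias present, or were some $\sigma_\ell(0)\neq0$, then $F(0,f(\omega)\odot v)$ would in general depend on $v$ through the later layers and the flatness would break down.
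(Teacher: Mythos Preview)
Your proof is correct and follows essentially the same route as the paper's: establish that $F(x,f(\omega)\odot w)=0$ for every $x$ and every $w$ using \Cref{lemma:4:no_path} together with $F(0,\cdot)=0$ from \Cref{ass:4:int:no_bias}, conclude that the loss is constant in $w$, and hence that the filtered gradient estimate vanishes. If anything, you are more careful than the paper on two points it leaves implicit: you spell out the subgraph argument showing $\CC(G(F(\cdot,f(\omega)\odot v)))\subseteq\CC(G(F(\cdot,f(\omega))))$ before invoking \Cref{lemma:4:no_path}, and you make the chain rule explicit to obtain exactly $f(\omega)\odot g(f(\omega)\odot w,\xi_t)=0$ rather than the stronger (and not obviously true) claim $g(f(\omega)\odot w,\xi_t)=0$ that the paper writes.
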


\subsubsection{Modified dropout}\label{sec:4:int:algo}

We may alternatively enforce \zcref[S]{ass:4:zero_updates} via the learning algorithm itself.
We can require that whenever there is no path across the network, one does not update the weights.
By doing this, fewer assumptions are needed on the network architecture, and the networks may include biases and any type of activation function.

The modification sets the gradient estimate to zero at step $t$ if the filter samples at $t$ yield no path across the network.
This results in a different filter distribution.
Consider filters~$f$, then the modified filters are given by $\tilde{f} := \1{E}f$ where $E = \set{\norm{\mathcal{C}(G(F( \cdot , f)))} >0}$.
Since this is a modification of the filters, we also require no further assumptions on $g( \cdot , \xi)$ (other than \zcref[S]{ass:4:second_moment}).

Note that changing the dropout filters changes the dropout objective.
Specifically, this modified algorithm tries to minimise a different objective function.
We argue next that, philosophically, the modified objective is better than the original objective.
If this modified algorithm indeed tries to minimise a better objective and still exhibits the percolation problem, this suggests that a percolation problem also exists for the unmodified algorithm.
However, this percolation problem may not exhibit itself in the same manner as \zcref[S]{thm:4:main} because the unmodified algorithm may still update the parameters even if there is no learning.
We do not have a proof for these claims, but we can argue them by looking at the objectives.

Consider dropout with filters $f\sim \lambda$ and ideal \gls{SGD} gradient estimate $g(w, \xi) = \nabla \ell (F(X, w), Y)$.
We can decompose the dropout objective into two parts depending on the event $E$.
Let $\theta(\lambda) := \PP{E}$ be the percolation probability under filters $f$, then
\begin{align}
	D(w)
	 &
	=
	\EE{\ell(F(X,{f}\odot w), Y)}
	\\
	 &
	=
	\PP{E} \EE{\ell(F(X, {f}\odot w), Y)\mid E}
	\nonumber \\
	 &
	\phantom{=}
	+
	\PP{\overline{E}} \EE{\ell(F(X, {f}\odot w), Y) \mid \overline{E}}
	\\
	 &
	=
	\theta(\lambda)D_{\text{path}}(w)
	\nonumber \\
	 &
	\phantom{=}
	+
	(1-\theta(\lambda))D_{\text{no path}}(w),
\end{align}
with $D_{\text{path}}(w), D_{\text{no path}}(w)$ defined accordingly.
By \zcref[S]{lemma:4:no_path},
\begin{align}
	D_{\text{no path}}(w)
	 &
	:=
	\EE{\ell(F(X, {f}\odot w), Y)\mid \overline{E}}
	\\
	 &
	=
	\EE{\ell(F(0, {f}\odot w), Y)\mid \overline{E}}
\end{align}
which is independent of the input data and therefore cannot be a \textquote{good} objective.
If the \gls{NN} $F$ is deep and thus $\theta(\lambda)$ is small, then $D(w)$ is dominated by this poor objective $D_{\text{no path}}(w)$ and so the majority of the geometry (including the minimisers) is not influenced by the input data.
On the other hand, the objective $\tilde{D}(w)$ for the modified filters $\tilde{f}$ can be deconstructed as such,
\begin{align}\label{eq:4:mod}
	\tilde{D}(w)
	 &
	=
	\EE{\ell(F(X, \tilde{f}\odot w), Y)}
	\nonumber \\
	 &
	=
	\PP{E} \EE{\ell(F(X, \tilde{f}\odot w), Y)\mid E}
	\nonumber \\
	 &
	\phantom{=}
	+
	\PP{\overline{E}} \EE{\ell(F(X, \tilde{f}\odot w), Y)\mid \overline{E}}
	\nonumber \\
	 &
	=
	\theta(\lambda) \EE{\ell(F(X, f \odot w), Y)\mid {E}}
	\nonumber \\
	 &
	\phantom{=}
	+
	(1-\theta(\lambda))\EE{\ell(F(X, 0), Y)}
	\nonumber \\
	 &
	=
	\theta(\lambda) D_{\text{path}}(w)
	+
	\text{ Constant}.
\end{align}
Therefore, the modified objective is similar, but is only dependent on $D_{\text{path}}(w)$, which contains all the information from the input \emph{and} output data.
Therefore, this objective looks closer to the intended objective with the original filters, but removes the case that there is no path, which we do not want to optimise for.

Ideally, we would like to move beyond \zcref[S]{ass:4:zero_updates} and show that a percolation problem exists for general \glspl{NN} with biases and any dropout algorithm.
This requires future research, as new mathematical tools are required for this.

\section{Conclusion}

We proposed and explored the problem of percolation in training deep \glspl{NN} with dropout.
We established on which \gls{NN} topologies dropout and dropconnect filters exhibit percolation behaviour, whereby no paths connect the input and output layers of the \gls{NN}.
We showed that this percolation behaviour can cause a critical breakdown in the performance of dropout in \glspl{NN} without biases, in which no learning is achieved by training the \gls{NN}.
Next, we discussed that this problem should exist for general \glspl{NN} with biases.
The lack of paths causes the dropout objective to be weakly dependent on the input data and thus does not capture the goal of supervised learning.

A few aspects of this research are left open to future work.
In \zcref[S]{sec:3:bond}, the gap can be closed in the critical probability of the dropconnect--percolation problem in the critically scaled regime.
Building on \zcref[S]{chapter:dropout}, an exploration into the performance of modified dropout would give insight into the heuristics used in \zcref[S]{sec:4:int:algo}.
Additionally, a numerical study comparing (modified) dropout would be valuable to the applicability of the results of \zcref[S]{chapter:dropout} to real world \glspl{NN}, both with and without biases.

We close with a word of advice.
This work shows that one should consider the interplay between the hyperparameters depth, width, and dropout probability.
If your neural network with dropout is not sufficiently wide, it may experience slow convergence because of percolation.

\bibliographystyle{plain}
\bibliography{candlekeep}

\onecolumngrid{}

\appendix

\section{Auxiliary lemmas}

We use the following two lemmas as conditions for the convergence of the site percolation function.

\begin{lemma}\label{lemma:3:proofs:np0}
	Let $p\in (0,1)$.
	If $np^{W(n)}\to 0$ as $n\to\infty$, then ${(1-p^{W(n)})}^n \to 1$ as $n\to\infty$.
\end{lemma}

\begin{proof}[Proof of \zcref{lemma:3:proofs:np0}]
	Let $p\in(0,1)$.
	Then by assumption $p^{W(n)}\to 0$ and thus $\log(1-p^{W(n)}) = -p^{W(n)} + O(p^{2W(n)})$.
	Consequently,
	\begin{equation}
		{(1-p^{W(n)})}^n
		=
		\exp(n\log(1-p^{W(n)}))
		=
		\exp(-np^{W(n)} + O(np^{2W(n)}))
		\to
		1
		.
	\end{equation}
\end{proof}

\begin{lemma}\label{lemma:3:proofs:npinfty}
	Let $p\in (0,1)$.
	If $np^{W(n)}\to \infty$ as $n\to\infty$, then ${(1-p^{W(n)})}^n \to 0$ as $n\to\infty$.
\end{lemma}

\begin{proof}[Proof of \zcref{lemma:3:proofs:npinfty}]
	Let $p\in(0,1)$.
	Then
	\begin{equation}
		{(1-p^{W(n)})}^n = \exp{(n\log(1-p^{W(n)}))} \leq \exp(-np^{W(n)}) \to 0.
	\end{equation}
\end{proof}

\begin{remark}
	The conditions in \zcref[S]{lemma:3:proofs:np0,lemma:3:proofs:npinfty} are extendable to the case that $p=p(n)$ scales with the depth.
	In fact, these conditions show a direct relationship between the scaling of $p(n)$ and $W(n)$.
\end{remark}

\section{Remaining proofs}\label{app:b}

\begin{proof}[Proof of \zcref{prop:3:site:1}]
	Since each vertex is fully connected to every vertex in the next layer, as long as there is at least 1 vertex in each layer, there is a path across the network.
	As such, we see that the probability that there is a path across the network is just the probability that there are no vertical cuts in the graph.
	The probability that there is at least one vertex in layer $\ell$ is equal to $1-p^W$.
	Since there are $L$ hidden layers of vertices, $\theta^{\text{site}}(p, W, L) = {(1-p^W)}^L$.
	Note that there are $L+2$ total layers, but the first and last layers do not have vertices removed.
\end{proof}

\begin{proof}[Proof of \zcref{prop:3:site:2:trivial}]
	This proof follows from \zcref[S]{lemma:3:proofs:np0, lemma:3:proofs:npinfty}.
	We just need to check the conditions on $np^{W(n)}$.
	Begin by observing that
	\begin{equation}
		\label{eq:prop:3:site:proof}
		np^{W(n)}
		=
		\exp
		\Bigl(
		\log(n)
		\Bigl(
			1
			+
			\frac{W(n)}{\log n} \log p
			\Bigr)
		\Bigr)
		.
	\end{equation}
	We use this to tackle the two cases next.
	\begin{itemize}
		\item If $W(n) = \omega(\log n)$, then $W(n)/\log(n)\to \infty$.
		      Thus, $np^{W(n)} \to 0$ as $n\to\infty$ because $\log(p)<0$.
		      By \zcref[S]{lemma:3:proofs:np0}, for all $p\in(0,1)$, $\theta_n^{\text{site}}(p) \to 1$ as $n\to\infty$, and so $p_c = 1$ as $\theta_n^{\text{site}}(1) = 0$.
		\item If $W(n) = o(\log n)$, then $W(n)/\log(n)\to 0$.
		      Therefore, $np^{W(n)} \to \infty$ as $n\to\infty$.
		      By \zcref[S]{lemma:3:proofs:npinfty}, for all $p\in(0,1)$, $\theta_n^{\text{site}}(p) \to 0$ as $n\to\infty$, and so $p_c = 0$ as $\theta_n^{\text{site}}(0) = 1$.
	\end{itemize}
	This concludes the proof.
\end{proof}

\begin{proof}[Proof of \zcref{prop:3:site:2:critical}]
	This proof also follows from \zcref[S]{lemma:3:proofs:np0, lemma:3:proofs:npinfty}.
	In what follows, let $p\in(0,1)$, $c>0$.

	If $W(n)/\log n \to c$ as $n\to\infty$, then $1 + \log(p)W(n)/\log(n) \to 1 + c\log(p)$ by~\eqref{eq:prop:3:site:proof}.
	Then:
	\begin{itemize}
		\item
		      If $1+c\log(p) < 0$, then $np^{W(n)}\to 0$.
		      Hence, $\theta_n^{\text{site}}(p) \to 1$ by \zcref[S]{lemma:3:proofs:np0}.
		\item
		      If $1+c\log(p) > 0$, then $np^{W(n)}\to \infty$.
		      Hence, $\theta_n^{\text{site}}(p) \to 0$ by \zcref[S]{lemma:3:proofs:npinfty}.
	\end{itemize}
	Combining these two facts gives that the critical probability is attained at $1+c\log(p) = 0$.
	Consequently, $p_c = \exp(-1/c)$.
	This proves parts i and ii.

	To prove part iii, observe that if $1+c\log(p)=0$, then $np^{W(n)}\to 1$.
	Hence,
	\begin{equation}
		\theta_n^{\text{site}}(p)
		=
		{(1-p^{W(n)})}^n
		=
		\exp(-np^{W(n)} + O(np^{2W(n)}))
		\to
		\exp(-1)
		.
	\end{equation}
	That is it.
\end{proof}

\begin{proof}[Proof of \zcref{lemma:3:bond:complete}]
	Let $\mathcal{C}_\ell$ be as in \zcref[S]{eq:3:reachable_nodes}, and $N_\ell := \norm{\mathcal{C}_\ell}$.
	By conditioning on the random vector $N=(N_1, \ldots, N_{L+1})$ and observing that $N_{\ell+1}$ depends only on $N_\ell$, we find that
	\begin{align}
		 &
		\theta^{\text{bond}}(p, W, L) = \PP{\norms{\mathcal{C}_{L+1}} >0}                                                                                           \\
		 &
		=\sum_{n_1, \ldots, n_{L+1} = 0}^W \PP{\norms{\mathcal{C}_{L+1}} >0 \mid N_1 = n_1, \ldots, N_{L+1} = n_{L+1}}\PP{ N_1 = n_1, \ldots, N_{L+1} = n_{L+1}}    \\
		 &
		= \sum_{n_1, \ldots, n_{L+1} = 0}^W \1{\set{n_{L+1}>0}}\PP{ N_1 = n_1, \ldots, N_{L+1} = n_{L+1}}                                                           \\
		 &
		= \sum_{n_1, \ldots, n_{L+1} = 0}^W \1{\set{n_{L+1}>0}}\PP{ N_{L+1} = n_{L+1} \mid N_1 = n_1, \ldots, N_{L} = n_{L}} \PP{ N_1 = n_1, \ldots, N_{L} = n_{L}} \\
		 &
		= \sum_{n_1, \ldots, n_{L+1} = 0}^W \1{\set{n_{L+1}>0}}\PP{ N_{L+1} = n_{L+1} \mid  N_{L} = n_{L}} \PP{ N_1 = n_1, \ldots, N_{L} = n_{L}}                   \\
		 &
		=  \sum_{n_1, \ldots, n_{L+1} = 0}^W \1{\set{n_{L+1}>0}} \prod_{\ell = 0}^{L} \PP{ N_{\ell+1} = n_{\ell+1} \mid  N_{\ell} = n_{\ell}}.
	\end{align}
	Suppose now that $N_\ell = n_\ell$.
	Then for each vertex in the $(\ell+1)$st layer, there are $n_\ell$ possible edges that could connect it to a path.
	It is thus connected with probability $1-p^{n_\ell}$ independently of every other vertex in its layer.
	Then the number of vertices in this layer connected to the previous layer is the sum of $W$ independent Bernoulli trials each with success probability $1-p^{n_\ell}$.
	Consequently, $N_{\ell+1} \mid N_{\ell} = n_\ell \sim \text{Bin}(W, 1-p^{n_\ell})$.
	We also know that if $n_\ell = 0$, then $1-p^{n_\ell }= 0$ and thus the sum over $n_\ell$ need only run from $1$ to $W$.
\end{proof}

\begin{proof}[Proof of \zcref{prop:3:bond:1}]
	Let $\mathcal{C}_\ell$ be defined as in \zcref[S]{eq:3:reachable_nodes}, i.e., the set of vertices in layer $\ell$ that are connected to any vertex in the input layer.

	We begin by proving the upper bound.
	Observe that
	\begin{align}
		\label{eq:proof:recursion}
		\theta^{\text{bond}}(p, W, L) = \PP{\norm{\mathcal{C}_{L+1}} > 0}
		 &
		= \PP{\norm{\mathcal{C}_{L+1}} > 0 \mid \norm{\mathcal{C}_{L}} > 0} \PP{\norm{\mathcal{C}_{L}} > 0}
		\\
		 &
		= \PP{\norm{\mathcal{C}_{L+1}} > 0 \mid \norm{\mathcal{C}_{L}} > 0}\theta^{\text{bond}}(p, W, L-1).
	\end{align}
	Conditioning on the number of vertices in the $\ell$th layer, one finds
	\begin{align}
		\PP{\norm{\mathcal{C}_{\ell + 1} }> 0 \mid \norm{\mathcal{C}_{\ell} }> 0}
		 &
		= \sum_{k=0}^W \PP{\norm{\mathcal{C}_{\ell + 1} }> 0 \mid \norm{\mathcal{C}_{\ell} }> 0, \;  \norm{\mathcal{C}_{\ell} } = k} \PP{ \norm{\mathcal{C}_{\ell} } = k \mid  \norm{\mathcal{C}_{\ell} }>0} \\
		 &
		= \sum_{k=1}^W \PP{\norm{\mathcal{C}_{\ell + 1} }> 0 \mid \norm{\mathcal{C}_{\ell} } = k} \PP{ \norm{\mathcal{C}_{\ell} } = k \mid  \norm{\mathcal{C}_{\ell} }>0}                                    \\
		 &
		= \sum_{k=1}^W (1-p^{kW}) \PP{ \norm{\mathcal{C}_{\ell} } = k \mid  \norm{\mathcal{C}_{\ell} }>0}                                                                                                    \\
		 & \leq \sum_{k=1}^W (1-p^{W^2}) \PP{ \norm{\mathcal{C}_{\ell} } k \mid  \norm{\mathcal{C}_{\ell} }>0}                                                                                               \\
		 & = 1-p^{W^2}.
	\end{align}
	It therefore follows from~\eqref{eq:proof:recursion} that $\theta^{\text{bond}}(p, W, L) \leq {(1-p^{W^2})}^{L+1}$.
	Note that $L$ is the number of hidden layers.
	There are thus $L+1$ layers of edges between the $L+2$ total layers of vertices.
	This means that $\theta^{\text{bond}}(p, W, 0) = \PP{\norm{\mathcal{C}_1} >0}$.

	We next prove the lower bound.
	We use a Chernoff bound to look at the event that the number of reachable nodes stays away from 0.
	As such, we focus on the event that at least half of all nodes are reached:
	\begin{align}
		\theta^{\text{bond}}(p, W, L) = \PP{\norm{\mathcal{C}_{\ell + 1} }> 0} & = \PP{\norm{\mathcal{C}_{\ell + 1} }> 0, \dots, \norm{\mathcal{C}_{1} }> 0} \geq \PP{\norm{\mathcal{C}_{\ell + 1} }> W/2, \dots, \norm{\mathcal{C}_{1} }> W/2}
		\\  &
		=
		\prod_{\ell = 0}^L
		\PP{\norm{\mathcal{C}_{\ell + 1} }> W/2 \mid \norm{\mathcal{C}_{\ell} }> W/2}
		.
		\label{eq:proof:bond:lower}
	\end{align}
	Here, we have exploited the Markov property.
	Let $n$ such that $W/2 < n \leq W$.
	From \zcref[S]{lemma:3:bond:complete} we know that $\norm{\mathcal{C}_{\ell + 1} } \mid \norm{\mathcal{C}_{\ell}}=n \sim \text{Bin}(W, 1-p^{n})$.
	We now bound
	\begin{equation}
		\PP{\norm{\mathcal{C}_{\ell + 1} }> W/2 \mid \norm{\mathcal{C}_{\ell} } = n} = \PP{\text{Bin}(W, 1-p^{n}) > W/2} = 1 - \PP{\text{Bin}(W, p^{n}) > W/2}
	\end{equation}
	We use the Chernoff bound~\cite[Thm.~2.1]{Mulzer:2018:Five}
	\begin{equation}
		\PP{\text{Bin}(W, p^{n}) > W/2} \leq \exp(-W D_{\text{KL}}(1/2 \,\|\, p^n)),
	\end{equation}
	where we may bound
	\begin{equation}
		D_{\text{KL}}(1/2 \,\|\, p^n) = -\frac{n}{2}\log{p} - \log{2} - \frac{1}{2} \log{1-p^n} \geq -\frac{W}{4}\log{p} - \log{2},
	\end{equation}
	since $\log(1-p^n) < 0$ and $n > W/2$.
	Applying the bound yields
	\begin{equation}
		\PP{\text{Bin}(W, p^{n}) > W/2} \leq \exp(\frac{W^2}{4}\log{p} + W \log(2)) = p^{W^2/4 + W\log(2)/\log(p)}
		.
	\end{equation}
	Because this holds for any $W/2 < n \leq W$, we may conclude that
	\begin{equation}
		\PP{\norm{\mathcal{C}_{\ell + 1} }> W/2 \mid \norm{\mathcal{C}_{\ell} }> W/2} \geq 1 - p^{W^2/4 + W\log(2)/\log(p)}
	\end{equation}
	for $\ell = 1,\dots, L$.
	Combining this with~\eqref{eq:proof:bond:lower} yields $\theta^{\text{bond}}(p, W, L) \allowbreak \geq \allowbreak {(1 - p^{W^2/4 + W\log(2)/\log(p)})}^{L+1}$.
\end{proof}

\begin{proof}[Proof of \zcref{prop:3:bond:2:trivial}]
	This proof follows from \zcref[S]{prop:3:site:2:trivial} after a transformation of $W$.
	In what follows, let $p\in(0,1)$ and $L(n) = n$.

	First, we prove the supercritical case; meaning, assume that $W(n) = \omega(\sqrt{\log n})$.
	Define $\tilde{W}(n) := {W(n-1)}^2 / 4 + (\log{2} / \log{p}) W(n-1)$.
	Note that $\tilde{W}(n) = \omega(\log(n))$.
	Consequently, using the lower bound of \zcref[S]{prop:3:bond:1},
	\begin{equation}
		\theta^{\text{site}}_n
		\geq
		{(1-p^{ {W(n)}^2 / 4 + (\log{2} / \log{p}) W(n)})}^{n+1}
		=
		{(1-p^{\tilde{W}(n+1)})}^{n+1}
		\to
		1
	\end{equation}
	as $n\to\infty$, where the limit is given by \zcref[S]{prop:3:site:2:trivial}.i.

	Next, we prove the subcritical case; meaning, assume that $W(n) = o(\sqrt{\log n})$.
	Define $\tilde{W}(n) := {W(n-1)}^2$.
	Note that $\tilde{W}(n) = o(\log(n))$.
	Using the upper bound of \zcref[S]{prop:3:bond:1}, we have
	\begin{equation}
		\theta^{\text{site}}_n
		\leq
		{(1-p^{{W(n)}^2})}^{n+1}
		=
		{(1-p^{\tilde{W}(n+1)})}^{n+1}
		\to
		0
	\end{equation}
	as $n\to\infty$, where the limit is given by \zcref[S]{prop:3:site:2:trivial}.ii.
\end{proof}

\begin{proof}[Proof of \zcref{prop:3:bond:2:critical}]
	This proof follows the same structure as the equivalent site percolation result \zcref[S]{prop:3:site:2:critical} after transformation of $W(n)$ in the same way as in the proof of \zcref[S]{prop:3:bond:2:trivial}.
	In what follows, let $c>0$, $L(n)=n$, and assume that $W(n)/\sqrt{\log{n}} \to c$ as $n \to \infty$.

	We first deal with the case $p < \exp(-4/c)$.
	Let $\tilde{W}(n):= {W(n-1)}^2 / 4 + (\log{2} / \log{p}) W(n-1)$.
	Use~\eqref{eq:prop:3:site:proof} to conclude that
	\begin{equation}
		1
		+
		\frac{\tilde{W}(n)}{\log{n}}\log{p}
		=
		1
		+
		\frac{{W(n-1)}^2}{4\log(n)}\log(p) + \frac{W(n-1)}{\log{n}}\log{2}
		\to
		1 + \frac{c\log{p}}{4}
		<
		0
	\end{equation}
	as $n\to\infty$.
	Thus, $np^{\tilde{W}(n)} \to 0$ as $n\to\infty$.
	Hence, by \zcref[S]{lemma:3:proofs:np0}, for all $p\in(0,1)$, $\theta_n^{\text{site}}(p) \to 1$ as $n\to\infty$.

	Next, consider the case $p > \exp(-1/c)$.
	The result follows for this case from \zcref[S]{prop:3:site:2:critical}.ii after transforming $\tilde{W}(n) = {W(n-1)}^2$.

	Together, the two cases imply that $p_c \in [\exp(-4/c), \exp(-1/c)]$.
\end{proof}

\begin{proof}[Proof of \zcref{lemma:3:bond:monotonic_p}]
	We can prove this with a coupling argument.
	Let $E$ denote the set of all edges in a $W\times L$ rectangular layered network and $V$ the set of vertices.
	Consider two coupled graphs $\hat{G_1} = (V, E_1), \hat{G_2} = (V, E_2)$ where $\hat{G_2}\sim G(p_2, W, L)$ and $\hat{G_1}$ is constructed as follows: suppose that for each edge $e\in E_2$ we have $e\in E_1$, then for each remaining edge $e\in E\setminus E_2$ we say that $e\in E_1$ independently with probability $1-\frac{p_1}{p_2}$.

	\begin{figure}[hbtp]
		\centering
		\includegraphics[width=0.618\linewidth]{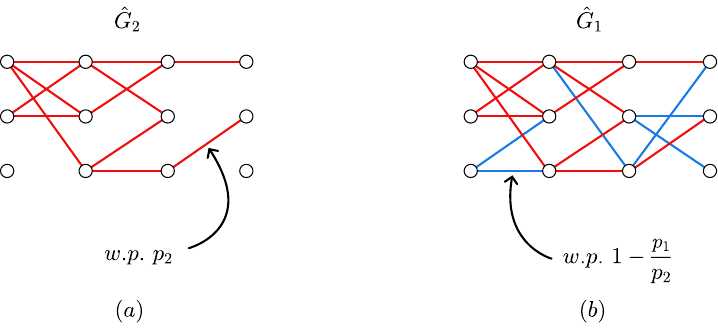}
		\caption{An example of the coupling in the proof of \zcref[S]{lemma:3:bond:monotonic_p} with $W=3, L=2$.}
	\end{figure}

	Let $e\in E$.
	Observe that
	\begin{align}
		\PP{e\in E_1}
		 &
		=
		\PP{ e \in E_1 \mid e \in E_2 }
		\PP{ e \in E_2 }
		+
		\PP{ e \in E_1 \mid e \not\in E_2 }
		\PP{ e \not\in E_2 }
		\\
		 &
		= 1\cdot (1-p_2) + \Bigl(1-\frac{p_1}{p_2}\Bigr)p_2 = 1-p_1
	\end{align}
	and therefore each edge belongs to the graph $\hat{G_1}$ independently with probability $1-p_1$, and thus it has distribution $G(p_1, W, L)$.
	To verify the independence, let $e \neq f$ and observe that indeed
	\begin{align}
		\PP{ e, f \in E_1 }
		 &
		=
		\PP{ e, f \in E_1 \mid e, f \in E_2 }
		\PP{ e, f \in E_2 }
		+
		\PP{ e, f \in E_1 \mid e \in E_2, f \not\in E_2 }
		\PP{ e \in E_2, f \not\in E_2 }
		+
		\cdots
		\\
		 &
		=
		1 \cdot {(1-p_2)}^2
		+
		2 \Bigl( 1 - \frac{p_1}{p_2} \Bigr) \cdot p_2 (1 - p_2)
		+
		{\Bigl( 1 - \frac{p_1}{p_2} \Bigr)}^2 \cdot p_2^2
		\\
		 &
		=
		{(1 - p_1)}^2
		=
		\PP{ e \in E_1 }
		\PP{ f \in E_1 }
		.
	\end{align}
	By construction, $\hat{G_2} \subset  \hat{G_1}$ and thus $ \mathcal{C}(\hat{G_2}) \subset \mathcal{C}(\hat{G_1})$.
	We therefore conclude that
	\begin{align}
		\theta^{\text{bond}}(p_1, W, L) = \PP{\norms{\mathcal{C}( \hat{G_1})} >0 } \geq \PP{\norms{\mathcal{C}( \hat{G_2})} >0} = \theta^{\text{bond}}(p_2, W, L).
	\end{align}
	Note that each edge is given two chances to be added to $\hat{G_1}$ and at each chance it is added with some probability, independently of every other edge.
	As such, each edge is indeed added independently.
\end{proof}

\begin{proof}[Proof of \zcref{lemma:3:bond:monotonic_w}]
	We can prove this with another coupling argument, the idea being to embed the smaller graph into the larger graph.

	Let $F_i$ denote the set of all edges in a $W_i\times L$ rectangular layered network and $V_i$ the set of vertices.
	Consider two coupled graphs $\hat{G_1} = (V_1, E_1), \hat{G_2} = (V_2, E_2)$ where $\hat{G_1} \sim G(p, W_1, L)$ and $\hat{G_2}$ is constructed as follows: suppose that for each edge $e\in E_1$ we have $e \in E_2$, then for each remaining edge $e \in F_2 \setminus F_1$ we say that $e\in E_2$ with probability $p$ independently for all edges.
	Consequently, $\hat{G_2}$ has distribution $G(p, W_2, L)$.
	An example of this coupling can be found in \zcref[S]{fig:3couplew}.

	\begin{figure}[htbp]
		\centering
		\includegraphics[width=0.618\linewidth]{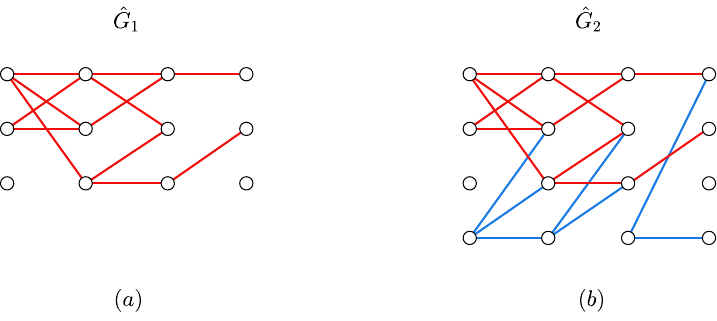}
		\caption{%
			An example of the coupling in the proof of \zcref[S]{lemma:3:bond:monotonic_w}, with $W_1=3,  W_2 = 4$, and $L=2$.
		}%
		\label{fig:3couplew}
	\end{figure}

	By construction, $\hat{G_1} \subset \hat{G_2}$, and therefore $\mathcal{C}(\hat{G_1}) \subset \mathcal{C}(\hat{G_2})$.
	Consequently,
	\begin{align}
		\theta^{\text{bond}}(p, W_1, L) = \PP{\norms{\mathcal{C}( \hat{G_1})} >0 } \leq \PP{\norms{\mathcal{C}( \hat{G_2})} >0} = \theta^{\text{bond}}(p, W_2, L).
	\end{align}
\end{proof}

\begin{proof}[Proof of \zcref{lemma:3:connection}]
	Let $\hat{G}^{\text{bond}} \sim G^{\text{bond}}(p, W, L)$.
	We now construct a graph $\hat{G}^{\text{site}}$ from $\hat{G}^{\text{bond}}$ as follows: for all $v = {(i, \ell)}\in V^{\text{hidden}} := \set{1, \dots, W} \times\set{1, \dots, L}$, we say that
	\begin{equation}
		v \in U \;\;\;\;\;\text{ if }\;\;\;\;\; \norm{\set{{(j, \ell - 1)}  : j \in [W], \; ({(j, \ell - 1)},{(i, \ell)}) \in \hat{E}^{\text{bond}} }}=0.
	\end{equation}
	Let $H$ be the $W\times L$ Rectangular Layered Network, then define $\hat{G}^{\text{site}}$ to be the induced subgraph of $H= (V, E)$ over vertices $V\setminus U$.
	An example of these couplings can be found in \zcref[S]{fig:3:couplesitebond}.

	\begin{figure}[hbtp]
		\centering
		\includegraphics[width=0.618\linewidth]{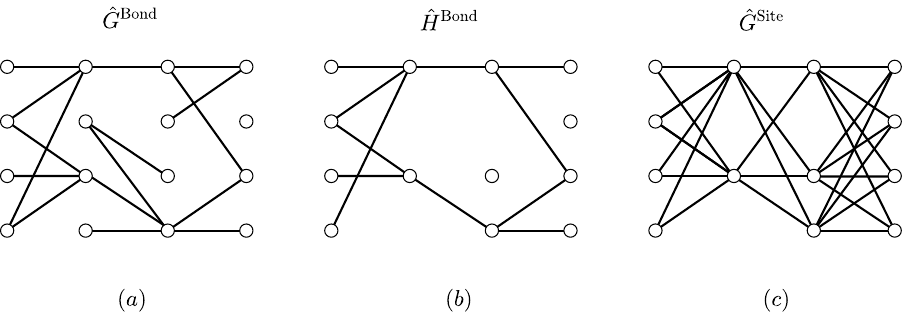}
		\caption{%
		An example of the couplings detailed in the proof of \zcref[S]{lemma:3:connection}.
		Observe that the degree 0 vertex in $\hat{H}^{\text{Bond}}$ is not deleted because it has indegree of 1 in $\hat{G}^{\text{Bond}}$, and thus the vertex belongs to $\hat{G}^{\text{Site}}$.
		This occurs because the bound in \zcref[S]{lemma:3:connection} is loose.
		}%
		\label{fig:3:couplesitebond}
	\end{figure}

	First, observe that for all $v = {(i, \ell)}\in V^{\text{hidden}}$,
	\begin{align}
		\PP{v \in \hat{V}^{\text{site}}}
		 &
		=
		\PP{\norm{\set{{(j, \ell - 1)}   : j \in [W], \; ({(j, \ell - 1)},{(i, \ell)}) \in \hat{E}^{\text{bond}} }}>0}
		\\
		 &
		=
		1 - \PP{\norm{\set{{(j, \ell - 1)}   : j \in [W], \; ({(j, \ell - 1)},{(i, \ell)}) \in \hat{E}^{\text{bond}} }}=0}
		\\
		 &
		\overset{(\star)}{=} 1 - \prod_{j=1}^W\PP{({(j, \ell - 1)},{(i, \ell)}) \not \in \hat{E}^{\text{bond}} }
		\\
		 &
		=
		1 - p^W,
	\end{align}
	where in $(\star)$ we used the independence of edges in $\hat{G}^{\text{bond}}$.
	Next, note that $v \in \hat{V}^{\text{site}}$ independently of any other vertex.
	Only edges pointing into a vertex matter and so if ${(i, \ell)} \neq {(i', \ell')}$ then the sets of edges pointing into these vertices are disjoint and thus independent due to the independence of edges in $\hat{G}^{\text{bond}}$.
	Consequently, for all $v \in V^{\text{hidden}}$, $v \in \hat{V}^{\text{site}}$ independently with probability $1-p^W$.
	In conclusion, $\hat{G}^{\text{site}} \sim G^{\text{site}}(p^W, W, L)$.

	We construct another graph $\hat{H}^{\text{bond}}$, which is the induced subgraph of $\hat{G}^{\text{bond}}$ over the vertices $\hat{V}^{\text{bond}}\setminus U$.
	Observe that this subgraph is exclusively missing vertices that were not connected to the previous layer and thus $\mathcal{C}(\hat{G}^{\text{bond}}) = \mathcal{C}(\hat{H}^{\text{bond}})$.
	By construction, we have that $\hat{H}^{\text{bond}} \subset \hat{G}^{\text{site}}$ and thus $\mathcal{C}(\hat{H}^{\text{bond}}) \subset \mathcal{C}(\hat{G}^{\text{site}})$.
	Consequently,
	\begin{align}
		\theta^{\text{bond}}(p, W, L) = \PP{\norms{\mathcal{C}(\hat{G}^{\text{bond}})} >0 }
		 &
		= \PP{\norms{\mathcal{C}(\hat{H}^{\text{bond}})} >0} \\
		 &
		\leq
		\PP{\norms{\mathcal{C}( \hat{G}^{\text{site}})} >0} = \theta^{\text{site}}(p^W, W, L).
	\end{align}
\end{proof}

\begin{proof}[Proof of \zcref{thm:4:main}]
	Define the events $E^{(n)}_t := \sets{\norms{\mathcal{C}(G(F( \cdot , f^{(n)}_t) ))}> 0}$ for $t,n\in\NN$.
	By \zcref[S]{ass:4:zero_updates},
	\begin{equation}
		f_t^{(n)}\odot g^{(n)}(f_t^{(n)} \odot w, \xi_t^{(n)}) = \1{E_t^{(n)}}f_t^{(n)}\odot g^{(n)}(f_t^{(n)} \odot w, \xi^{(n)}_t)
	\end{equation}
	for all $w\in\CW^{(n)}$.
	It then follows that there exists an $M>0$ such that
	\begin{align}
		\EE{\normRs{f_t^{(n)}\odot g^{(n)}(f_t^{(n)} \odot w, \xi_t^{(n)})}}\;\;
		 &
		\overset{\mathclap{A~\ref{ass:4:zero_updates}}}{=}\; \;\EE{\1{E_t^{(n)}} \normRs{f^{(n)}_t\odot g^{(n)}(f^{(n)}_t \odot w, \xi^{(n)}_t)}}        \\
		 &
		\overset{\mathclap{(\star)}}{=}\;\;  \EE{\EE{\1{E_t^{(n)}} \normRs{f_t^{(n)}\odot g^{(n)}(f_t^{(n)} \odot w, \xi_t^{(n)})}\bigm\vert f^{(n)}_t}} \\
		 &
		=\;\; \EE{\1{E^{(n)}_t}\EE{ \normRs{f^{(n)}_t\odot g^{(n)}(f^{(n)}_t \odot w, \xi^{(n)}_t)}\bigm\vert f^{(n)}_t}}                                \\
		 &
		\overset{\mathclap{(\star\star)}}{\leq}\;\; \EE{\1{E^{(n)}_t}\EE{ \normRs{g^{(n)}(f^{(n)}_t \odot w, \xi^{(n)}_t)}\bigm\vert f^{(n)}_t}}         \\
		 &
		\overset{\mathclap{A~\ref{ass:4:second_moment}}}{\leq}\;\; \EE{\1{E^{(n)}_t}
		M}                                                                                                                                               \\
		 &
		=\; M \theta(n),
		\label{eq:4:proof_E_bound}
	\end{align}
	where $(\star)$ invokes the tower property, and $(\star\star)$ uses that the filters are $\set{0,1}$-valued.
	By the recursive definition of dropout \gls{SGD} and nonnegativity of the $\alpha_t^{(n)}$, it follows that
	\begin{align}
		\lVert{w_{T(n)}^{(n)} - w_0^{(n)}}\rVert\;
		 &
		=\; \normR{\sum_{t=0}^{T(n)-1} \alpha_t^{(n)}f^{(n)}_t \odot g^{(n)}(f^{(n)}_t \odot w^{(n)}_t, \xi_t^{(n)})} \\
		 &
		\overset{\Delta}{\leq}\; \sum_{t=0}^{T(n)-1} \alpha_t^{(n)} \normRs{f^{(n)}_t \odot g^{(n)}(f^{(n)}_t \odot w^{(n)}_t, \xi_t^{(n)})},
	\end{align}
	almost surely.
	Taking expectations of the above, by~\eqref{eq:4:proof_E_bound} we know that there exists an $M>0$ such that,
	\begin{align}
		\EE{\lVert{w_{T(n)}^{(n)} - w_0^{(n)}}\rVert}\;
		 &
		\leq
		\sum_{t=0}^{T(n)-1} \alpha_t^{(n)} \EE{\normRs{f^{(n)}_t \odot g^{(n)}(f^{(n)}_t \odot w^{(n)}_t, \xi_t^{(n)})}} \\
		 &
		\overset{\mathclap{\eqref{eq:4:proof_E_bound}}}{\leq}
		\; M \theta(n) \sum_{t=0}^{T(n)-1} \alpha_t^{(n)}.
	\end{align}
\end{proof}

\begin{proof}[Proof of \zcref{cor:4:constantW}]
	The proof of the first statement follows directly from \zcref[S]{thm:4:main} when bounding $\theta(n)$ using \zcref[S]{prop:3:bond:1}:
	\begin{equation}
		\theta(n) \leq {(1-p^{{W(n)}^2})}^{n+1} = \exp((n+1)\log(1-p^{{W(n)}^2})) \leq \exp(-(n+1)p^{{W(n)}^2})\leq \exp(-np^{{W(n)}^2}).
	\end{equation}
	We prove both statements (i) and (ii) using properties of (generalized) harmonic numbers.
	There namely exists a $K_1>0$ such that
	\begin{equation}
		\label{eq:4:integral1}
		\sum_{t=0}^{T(n)-1}
		\frac{\alpha}{{(t+1)}^\rho}
		=
		\alpha H_{T(n)}^{(\rho)}
		\leq
		K_1 {T(n)}^{1-\rho}
	\end{equation}
	and likewise there exists a $K_2>0$ such that
	\begin{equation}
		\label{eq:4:integral2}
		\sum_{t=0}^{T(n)-1}
		\frac{\alpha}{t+1}
		=
		\alpha H_{T(n)}
		\leq
		K_2 \log T(n)
		.
	\end{equation}

	To establish statement (i), observe that from~\eqref{eq:4:integral1} it follows that there exists a $K_1>0$ such that
	\begin{align}
		\exp(-np^{W^2})\sum_{t=0}^{T(n)-1}\frac{\alpha}{{(t+1)}^{\rho}}
		 &
		\leq
		K_1\exp(-np^{{W(n)}^2}){T(n)}^{1-\rho} \\
		 &
		= O\left(\exp(-(1-c(1-\rho))np^{{W(n)}^2})\right)
	\end{align}
	which tends to $0$ because $c<{(1-\rho)}^{-1}$.
	Likewise, to establish (ii), it follows from~\eqref{eq:4:integral2} that there exists a $K_2>0$ such that
	\begin{align}
		\exp(-np^{W^2})\sum_{t=0}^{T(n)-1}\frac{\alpha}{t+1}
		 &
		\leq
		K_2\exp{(-np^{{W(n)}^2})}\log(T(n)) \\
		 &
		= O\left(\exp(-(1-c)np^{W^2})\right)
	\end{align}
	which tends to $0$ because $c<1$.
\end{proof}

\begin{proof}[Proof of \zcref{lemma:4:no_path}]
	Let $w\in\CW$.
	For $\ell = 1, \dots, L+1$, let $F_\ell( \cdot , w) := \sigma_\ell \circ S_\ell \circ \dots \circ \sigma_1 \circ S_1( \cdot)$ be the \gls{NN} up to layer $\ell$.
	This satisfies the recursive relationship $F_\ell( \cdot , w) = \sigma_\ell \circ S_\ell \circ F_{\ell-1}( \cdot , w)$ and note that $F_{L+1} = F$.
	Define $\CC_\ell := \CC(G(F_\ell( \cdot , w)))$.
	These are the vertices in layer $\ell$ connected to the input layer.
	Let $\bar{\CC_\ell}$ denote the set of vertices in layer $\ell$ that \emph{do not} have paths from the input to the vertex.

	We now prove the statement by induction.
	For $\ell = 1$, $F_1(x, w) = \sigma_1( A_1x + b_1)$, if $i \in \overline{\CC_1}$ then ${[A_1]}_{i,j} = 0$ for all $j=1, \dots, W_0$, implying that ${[A_1 x]}_i = 0$ and thus ${[F_1(x, w)]}_i = {[\sigma_1(A_1 x + b_1)]}_i = \sigma_1({[A_1 x + b_1]}_i) = \sigma_1({[b_1]}_i) = {[F_1(0, w)]}_i$.
	Observe that $\sigma$ is applied element-wise and therefore by abuse of notation, ${[\sigma(x)]}_i = \sigma(x_i)$.
	This yields the base case.

	We now assume that ${[F_\ell(x, w)]}_i = {[F_\ell(0, w)]}_i$ for each node $i \in \overline{\CC_\ell}$.
	What remains is to prove the inductive step.
	Let $k \in \overline{\CC_{\ell+1}}$, then $k$ cannot be connected to any vertex in $\CC_{\ell}$, else there would be a path to $k$, therefore ${[A_{\ell + 1}]}_{k,i}= 0$ for all $i \in \CC_{\ell}$.
	Therefore, ${[A_{\ell + 1}F_\ell(x, w)]}_k$ is a linear combination of ${[F_\ell(x, w)]}_i$ for $i\in \overline{\CC_{\ell}}$ and so ${[A_{\ell + 1}F_\ell(x, w)]}_k = {[A_{\ell + 1}F_\ell(0, w)]}_k$ by our assumption.
	As such, it follows that for all $k\in \overline{\CC_{\ell+1}}$
	\begin{equation}
		\begin{split}{[F_{\ell + 1}(x, w)]}_k
			 &
			=
			{[\sigma_{\ell+1}(A_{\ell+1} F_\ell(x, w) + b_{\ell+1})]}_k
			=
			\sigma_{\ell+1}({[A_{\ell+1} F_\ell(x, w) + b_{\ell+1}]}_k) \\
			 &
			= \sigma_{\ell+1}({[A_{\ell+1} F_\ell(0, w) + b_{\ell+1}]}_k) =
			{[F_{\ell+1}(0, w)]}_k.
		\end{split}
	\end{equation}
	By our induction argument, we see that this holds iteratively up to layer $L+1$.
	We know that $\CC_{L+1} = \emptyset$ and therefore ${[F_{L+1}(x, w)]}_i = {[F_{L+1}(0, w)]}_i$ for all $i=1, \dots, W$.
	We thus conclude that $F(x, w) = F_{L+1}(0, w)$ for all $x\in\CX$.
\end{proof}

\begin{proof}[Proof of \zcref{lemma:4:no_bias_works}]
	Let $w\in\CW$.
	Observe that, by \assref{ass:4:int:no_bias}, $F(0, w) = 0$ for all $w\in\CW$ because $\sigma_\ell(0)=0$ and the \gls{NN} $F$ does not have biases.
	Let $\omega \in \Omega$ such that $\CC(G(F(\,\cdot\, , f(\omega)))) = \emptyset$, then by Lemma~\ref{lemma:4:no_path} $F(x, f(\omega)\odot w) = F(0, f(\omega)\odot w) = 0$.
	Consequently, $\ell(F(X_{t, i}, f(\omega)\odot w), Y_{t, i}) = \ell(0, Y_{t, i})$, which is independent of $w$ and so $\nabla \ell(F(X_{t, i}, f(\omega)\odot w), Y_{t, i}) = \nabla \ell(0, Y_{t, i}) = 0$ for all $i \in\NN$, $t=1,\ldots, m$ and thus $g(f(\omega)\odot w, \xi_t) = 0 $ for all $w\in \CW,\, t=1,\dots,m$.
\end{proof}

\end{document}